\documentclass[11pt]{article}

\usepackage[margin=1in]{geometry}
\usepackage{amsmath,amssymb,amsthm,mathtools}
\usepackage{microtype}
\usepackage{xcolor}
\usepackage[colorlinks=true,linkcolor=blue,citecolor=blue,urlcolor=blue]{hyperref}

\usepackage[nameinlink,capitalize,noabbrev]{cleveref}
\usepackage{algorithm}
\usepackage[noend]{algpseudocode}

\newtheorem{theorem}{Theorem}
\newtheorem{lemma}{Lemma}

\newtheorem{corollary}{Corollary}

\newtheorem{example}{Example}
\newtheorem{definition}{Definition}
\crefname{equation}{}{Equations}
\DeclareMathOperator*{\argmax}{arg\,max}

\newcommand{\X}{\mathcal X}

\newcommand{\E}{\mathbb E}
\newcommand{\R}{\mathbb R}
\newcommand{\KL}{\mathrm{KL}}

\newcommand{\supp}{\operatorname{supp}}
\newcommand{\AltReg}{\operatorname{Reg}_{\mathrm{Alt}}}
\newcommand{\dd}{\,\mathrm d}
\newcommand{\ARvec}[1]{\boldsymbol{#1}}
\newcommand{\ARzero}{\ARvec{0}}
\newcommand{\one}{\ARvec{1}}
\newcommand{\logp}{\log_{+}}

\newcommand{\reg}{\mathrm{Reg}}

\def\+#1{\mathcal{#1}}
\def\-#1{\mathbb{#1}}

\usepackage{natbib}
\let\oldcitet\citet
\let\oldcitep\citep
\renewcommand{\citet}[1]{\oldcitet*{#1}}
\renewcommand{\citep}[1]{\oldcitep*{#1}}

\title{Optimal Alternating Regret for Online Learning and Games\thanks{Authors are alphabetically ordered.}}
\author{
Yixin Tao\thanks{ITCS, Key Laboratory of Interdisciplinary Research of Computation and Economics, Shanghai University of Finance and Economics. Email: \texttt{taoyixin@mail.shufe.edu.cn}} \\
\and
Weiqiang Zheng\thanks{Yale University. Email: \texttt{weiqiang.zheng@yale.edu}}
}
\date{}

\begin{document}
\maketitle

\begin{abstract}
We settle the minimax-optimal alternating regret, a regret notion motivated by alternating learning dynamics in games, for both online linear optimization (OLO) and online convex optimization (OCO).

For OLO over the probability simplex $\Delta_d$, we give an algorithm with $O(\log d)$ alternating regret that remains a constant for any time horizon $T$, and a matching lower bound. Our constant regret bound significantly improves previous results with $O(\log ^{2/3}d \cdot T^{1/3})$ regret [Cevher, Cutkosky, Kavis, Piliouras, Skoulakis, Viano, NeurIPS 2023, Hait, Li, Luo, Zhang, COLT 2025]. As a result, we obtain alternating learning dynamics with $O(\log d /T)$ convergence to Nash equilibria in two-player zero-sum games and $O(\log d /T)$ convergence to coarse correlated equilibria in two-player general-sum games. This is the first uncoupled learning dynamics with $O(1/T)$ convergence to CCE in two-player general-sum games, while all prior works suffer additional $\log T$ factors.

For general OCO over a $d$-dimensional compact convex set, we give an algorithm with $O(d\log (1+T/d))$ alternating regret, improving the previous best of $\widetilde{O}(d^{2/3}T^{1/3})$. We also prove a matching lower bound of $\Omega(d\log (1+T/d))$, showing that the $\Omega(\log T)$ factor is unavoidable.

\end{abstract}

\tableofcontents

\section{Introduction}
We consider the standard online convex optimization (OCO) problem~\citep{zinkevich2003online, hazan2016introduction, orabona2019modern}. In each iteration $t \in [T]$, the learner first chooses an action $\ARvec{x}_t \in \+X$ from a $d$-dimensional convex set $\+X$, then the adversary chooses a bounded convex loss function $f_t:\+X \rightarrow [-1,1]$ and the learner suffers loss $f_t(\ARvec{x}_t)$. 

\paragraph{Regret and learning in games.} The classic goal is to minimize the (external) regret defined as
\begin{align*}
    \reg^T:=\max_{\ARvec{x} \in \+X} \left \{ \reg^T(\ARvec{x}) = \sum_{t=1}^T f_t(\ARvec{x}_t) - \sum_{t=1}^T f_t(\ARvec{x}) \right\}. \tag{standard regret}
\end{align*}
If the loss functions $\{f_t(\ARvec{x}) = \ARvec{c}_t ^\top \ARvec{x}\}$ are linear, the problem is known as the online linear optimization (OLO) problem. If additionally the action set $\+X = \Delta_d$ is the probability simplex, the problem reduces to the classic adversarial $d$-expert problem. The general OCO algorithm has regret $O(\sqrt{T})$, which is tight by the lower bound of $\Omega(\sqrt{T})$ even for the $2$-expert problem.

Algorithms for OCO/OLO have applications for learning in large-scale games and computing approximate equilibria. It is well-known that if players \emph{simultaneously} employ OLO algorithms to choose their strategies, then their time-averaged strategy converges to approximate Nash equilibria (NE) in two-player zero-sum games; their empirical distribution of play converges to approximate coarse correlated equilibria (CCE) in general-sum games~\citep{foster1997calibrated, freund1999adaptive, hart2000simple, cesa2006prediction}. The approximation depends on the average regret of the algorithm. Thus an algorithm with $\sqrt{T}$ regret in the adversarial setting implies $1/\sqrt{T}$ convergence rate in games. However, better convergence rate is possible in the game setting where every player employs the same algorithm. For two-player zero-sum games, the pioneering work of~\citep{daskalakis2011near} show $O(\log T/T)$ convergence rate to NE, while  optimistic online learning algorithms achieve the optimal $O(1/T)$ convergence rate~\citep{rakhlin2013predictable}, which can even be extended to last-iterate convergence~\citep{cai2023doubly}. For general-sum games, \citet{syrgkanis2015fast} show the first fast convergence rate of $O(1/T^{1/4})$, and a long line of work improve the convergence rate to $O(\log T/T)$~\citep{chen2020hedging, daskalakis2011near, anagnostides2022near-optimal, anagnostides2022uncoupled, farina2022near,cai2024near, soleymani2025faster, soleymani2025cautious}, some of which even hold for the stronger notion of correlated equilibrium and general convex games. A very recent work reports a $O(\sqrt{\log T}/T)$ convergence rate~\citep{tsuchiya2026sublogarithmic}. However, a major problem for learning in games remains open: can we achieve $O(1/T)$ convergence in general-sum games? This problem was open even for two-player games.  

\paragraph{Alternating regret and learning dynamics.} In this paper, we focus on \emph{alternation}, a simple trick for \emph{two-player} games where the players take turns to update their strategies (see \Cref{sec:alternating learning} for a formal definition). Compared to simultaneous learning,  alternating learning dynamics have strong empirical performance and are used for training superhuman AI models for poker~\citep{tammelin2014solving, bowling2015heads,brown2018superhuman, brown2019superhuman}. A theoretical explanation of the power of alternation, however, remained elusive until recent work. \citet{wibisono2022alternating} show that alternation with the classic Hedge algorithm~\citep{freund1997decision} gives a fast $O(1/T^{2/3})$ convergence rate in two-player zero-sum games. In their analysis, they propose a new regret notion capturing the nature of alternation, later termed as \emph{alternating regret} by~\citet{cevher2023alternation}. In the most general setting of OCO, the alternating regret is defined as follows:
\begin{align*}
    \reg^T_{\mathrm{Alt}}:=\max_{\ARvec{x} \in \+X} \left \{ \reg^T_{\mathrm{Alt}}(\ARvec{x}) = \sum_{t=1}^T (f_{t-1}(\ARvec{x}_t)+ f_t(\ARvec{x}_t)) - \sum_{t=1}^T (f_{t-1}(\ARvec{x}) + f_t(\ARvec{x})) \right\}. \tag{alternating regret}
\end{align*}
So the alternating regret can be seen as the standard regret with respect to the loss function $f_{t-1} + f_t$ in each iteration $t$ (define $f_0 = 0$). In alternating regret, the learner knows $f_{t-1}$ when they choose the strategy $\ARvec{x}_t$. This indicates that we may get $o(\sqrt{T})$ alternating regret even in the \emph{adversarial} setting where standard regret suffers $\Omega(\sqrt{T})$ bound. 

The work by~\citet{cevher2023alternation} show the first $o(\sqrt{T})$ alternating regret in the adversarial setting. They provide algorithms with $O(\log^{4/3} (dT) \cdot T^{1/3})$ regret for OLO over the probability simplex $\+X = \Delta_{d}$ (i.e., the $d$-expert problem), and even $O(\log T)$ when $\+X$ is the $\ell_2$ unit ball. Their results were improved by \citet{hait2025alternating}, who show that the classic Hedge algorithm already achieves an improved $O(\log^{2/3} d \cdot T^{1/3})$ alternating regret for the $d$-expert problem. Moreover, \citet{hait2025alternating} show that the continuous Hedge algorithm~\citep{narayanan2010random} achieves $\widetilde{O}(d^{2/3} T^{1/3})$ alternating regret for the general OCO problem over any convex set $\+X$. 

However, it is unknown whether $T^{1/3}$ is a fundamental limit for alternating regret. In this paper, we attack the main open question on alternating regret:
\begin{center}
     \emph{what is the minimax-optimal alternating regret for OLO/OCO?}
\end{center}
As remarked by~\citet{hait2025alternating},  ``\emph{This appears to be highly non-trivial even for very special cases---for example, even for $1$-dimensional OLO over $[-1,1]$ where $O(\log T)$ regret is achieved by \cite{cevher2023alternation}, figuring out whether the optimal bound is $\Theta(\log T)$ or $\Theta(1)$ appears to require new techniques.}"\footnote{One can easily show that $1$-dimensional OLO over $[-1,1]$ reduces to OLO over $\Delta_2$ (the 2-expert problem).}

\subsection{Our results}
In this paper, we settle the minimax-optimal alternating regret for OLO and OCO. We provide algorithms with optimal alternating regret with matching lower bounds, and fast alternating learning dynamics in two-player games.

\subsubsection{Optimal alternating regret}
\paragraph{Online linear optimization over the probability simplex.} We propose an online algorithm with $O(\log d)$ alternating regret for OLO over the probability simplex $\Delta_d$. The algorithm is anytime and does not need to know the time horizon $T$. It is remarkable that the alternating regret does not grow with $T$. 

\begin{theorem}[Constant alternating regret]\label{thm:main}
For any $T \ge 1$ and adaptive chosen linear loss sequence $\ARvec{c}_1,\ldots,\ARvec{c}_T \in[-1,1]^d$, the alternation-aware Hedge (AA-Hedge) algorithm (\Cref{alg:variance hedge}) over $\Delta_d$ guarantees that $
\AltReg^T\leq 8\log d$.
\end{theorem}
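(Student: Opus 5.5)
The plan is a potential-function argument for Hedge-type updates in which the learner uses the already revealed loss $\ARvec{c}_{t-1}$ to cancel the leading-order error. Set $L_t=\sum_{s\le t}\ARvec{c}_s$ and $\Phi_\eta(L)=-\frac1\eta\log\sum_i e^{-\eta L_i}$. The alternating comparator loss is $\sum_t\langle \ARvec{x},\ARvec{c}_{t-1}+\ARvec{c}_t\rangle=\langle \ARvec{x},2L_{T-1}+\ARvec{c}_T\rangle$. Since $\Phi_\eta(L)\ge \min_i L_i-\frac{\log d}{\eta}$, it suffices to show that the learner's cumulative loss is at most $\Phi_\eta$ evaluated on a potential of this cumulative comparator vector, plus $O(\log d)$.

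\textbf{Step 1 (exact one-round identity).} Let $\ARvec{x}_t\propto e^{-\eta L_{t-1}}$ and $\psi_t(s)=\log \E_{i\sim \ARvec{x}_t}e^{-s c_{t,i}}$. The potential increment $\Phi_\eta(L_t)-\Phi_\eta(L_{t-1})=-\psi_t(\eta)/\eta$ lies between $\langle \ARvec{x}_{t+1},\ARvec{c}_t\rangle=-\psi_t'(\eta)$ and $\langle \ARvec{x}_t,\ARvec{c}_t\rangle=-\psi_t'(0)$. Each loss $\ARvec{c}_t$ is charged twice in alternating regret: once when played against the strategy at time $t$, where it is unknown, and once at time $t+1$, where it is known. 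The total charge is therefore $\langle \ARvec{x}_t,\ARvec{c}_t\rangle+\langle \ARvec{x}_{t+1},\ARvec{c}_t\rangle=-\psi_t'(0)-\psi_t'(\eta)$. Compare this with $2(\Phi_\eta(L_t)-\Phi_\eta(L_{t-1}))=-2\psi_t(\eta)/\eta$. The difference is exactly the trapezoid-rule error for $\int_0^\eta\psi_t'$, namely $\frac{\eta^2}{12}\psi_t'''(\xi)$ for some $\xi$. This is a third cumulant, so the first- and second-order terms, which produce the $\sqrt T$ (or $T^{1/3}$) in standard analyses, cancel exactly. The term $\langle \ARvec{x}_T,\ARvec{c}_T\rangle$ appears only once; it is handled by the one-sided bound $\langle \ARvec{x}_T,\ARvec{c}_T\rangle\le$ increment $+\,\eta\cdot\mathrm{Var}$ and is absorbed below.

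\textbf{Step 2 (alternation-aware correction).} A fixed $\eta$ only gives $\log d/\eta+\eta^2\sum_t\mathrm{Var}_{\ARvec{x}_t}(\ARvec{c}_t)$, which is not constant. The AA-Hedge modification should therefore exploit $\ARvec{c}_{t-1}$ beyond plain Hedge: when playing at time $t$, the learner shifts its exponent along $\ARvec{c}_{t-1}$, choosing a multiplier $\lambda_t$ (equivalently a per-round learning rate $\eta_t$ depending on the realized $\ARvec{c}_{t-1}$) so that the known half of the round is charged \emph{strictly less} than its share of the potential. I will aim to prove a per-round inequality of the form
\[
\langle \ARvec{x}_t,\ARvec{c}_{t-1}\rangle+\langle\ARvec{x}_t,\ARvec{c}_t\rangle \le \Delta\Phi_t + a\,\eta_t^2 V_t - b\,\eta_t V_{t-1},
\]
where $V_t$ is the variance of $\ARvec{c}_t$ under the relevant Hedge distribution and $a,b$ are absolute constants with $b\ge a\eta_t$. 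The negative second-order term comes from the strict convexity of $\psi$ on the known side: $\Phi$ is strictly concave in the direction of $\ARvec{c}_{t-1}$, and the learner can choose where on that curve to sit.

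\textbf{Step 3 (telescoping), the main obstacle.} With learning rates that change over time, the potentials must be compared across different $\eta$. I will use the standard monotonicity of $\Phi_\eta$ in $\eta$ when $\eta_t$ is nonincreasing, or keep $\eta$ constant, say $\eta=1/4$, if the negative term in Step 2 already dominates for $|c|\le1$. The latter is what I expect: the positive cubic term $\frac{\eta^2}{12}|\psi'''|\le \frac{\eta^2}{6}V_t$ (since $|\kappa_3|\le 2\,\mathrm{Var}$ for costs in $[-1,1]$, with the variance taken under the tilted distribution, which is comparable up to constants for $\eta\le1$) should be matched round by round by the negative $\eta V$ term of the next round. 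What remains is $\frac{2\log d}{\eta}$ from the initial potential and the final comparison, which gives the claimed $8\log d$ for $\eta=1/4$. The hard part is precisely verifying this cancellation, that is, that the bonus available from knowing $\ARvec{c}_{t-1}$ really is proportional to a variance comparable to the one incurred by $\ARvec{c}_t$. The variances are measured under different distributions $\ARvec{x}_t$ versus $\ARvec{x}_{t+1}$; with $\eta\le 1/4$ and bounded losses, the density ratio between these is within $e^{\pm 2\eta}$, which is how I intend to bridge the gap.
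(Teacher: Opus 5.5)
Your proposal has a genuine gap: it never analyzes \Cref{alg:variance hedge}, and the inequality everything rests on is unproven. In the form you state it, that inequality also cannot deliver what Step 3 needs.

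Step 1 is a correct analysis of plain Hedge on $\sum_s\ARvec{c}_s$, and it is exactly why plain Hedge stalls at $T^{1/3}$. Step 2, however, replaces the actual update by an unspecified multiplier. The actual update has three ingredients: the quadratic term $-\beta(c_{t-1,i}-\theta_t)^2$ in the weights, the centering $\theta_t$, and the correction $x_{t,i}=p_{t,i}(1-\xi_{t,i}/2)$. The key per-round inequality is only something you ``aim to prove'', and Step 3 is ``what I expect''.

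With the plain potential $\Phi_\eta$, the per-round inequality cannot hold with the margin Step 3 needs, whatever $\ARvec{x}_t$ is. In your notation $L_t=\sum_{s\le t}\ARvec{c}_s$, take $\Delta\Phi_t=\Phi_\eta(L_t)-\Phi_\eta(L_{t-2})$. These telescope to $\Phi_\eta(L_T)+\Phi_\eta(L_{T-1})$, which matches the comparator loss $L_T+L_{T-1}$.

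\emph{First test.} Let $\ARvec{c}_{t-1}=\kappa\one$, so the learner has no information. Taking $\ARvec{c}_t=\epsilon\ARvec{v}$ with $\epsilon\to0$ forces $\ARvec{x}_t$ to equal the Hedge weights $\ARvec{\pi}_t\propto e^{-\eta L_{t-1}}$, and then forces $a\eta\gtrsim\frac12$. So the positive term is the second-order Hedge term $\frac{\eta}{2}\operatorname{Var}_{\ARvec{\pi}_t}(\ARvec{c}_t)$, not the cubic term of Step 1. The cubic-only cancellation appears only when you group the two charges of the same loss, not the two losses charged in the same round.

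\emph{Second test.} Let $\ARvec{c}_t=-\ARvec{c}_{t-1}$. Then $L_t=L_{t-2}$ and $(\ARvec{c}_{t-1}+\ARvec{c}_t)^\top\ARvec{x}_t=0$ for every $\ARvec{x}_t$. Your inequality then reads $b\eta V_{t-1}\le a\eta^2V_t$, where both are variances of the same vector up to sign. This gives $b\le a\eta$ up to an $e^{O(\eta)}$ change-of-measure factor, whereas Step 3 needs $b$ to beat $a\eta$ by such a factor.

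So the best you can get is plain Hedge's own bookkeeping, $a\eta^2(V_t-V_{t-1})$. Telescoping it across the moving measures $\ARvec{\pi}_t\to\ARvec{\pi}_{t+1}$ leaves an error of order $\eta^2V_t$ per round, which is exactly what you wanted to cancel. Any bonus you earn on the known loss by tilting $\ARvec{x}_t$ along $\ARvec{c}_{t-1}$ is paid back through the cross term $\operatorname{Cov}(\ARvec{c}_{t-1},\ARvec{c}_t)$ in the charge of the unknown loss.

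The missing idea is to change the potential, not just the played point. The paper uses
$\Phi_t=\max_\theta\log\sum_i\exp[\eta R_{t-1,i}-\beta(c_{t-1,i}-\theta)^2]$,
where $R$ is the regret with respect to the effective losses. With this potential, the spread of a newly revealed loss lowers the potential immediately. Its maximizer and Gibbs weights are $\theta_t$ and $\ARvec{p}_t$, and the algorithm plays $\ARvec{x}_t=\ARvec{p}_t\odot(\one-\ARvec{\xi}_t/2)$ with $\ARvec{\xi}_t=\ARvec{c}_{t-1}-\theta_t\one$.

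Then $\Phi_{t+1}-\Phi_t=H_t(\ARvec{\xi}_{t+1})$ for a function $H_t$ with three properties:
\begin{itemize}
\item it is concave on $[-2,2]^d$, since its Hessian is negative semidefinite because $2\beta\ge(\eta+4\beta)^2$ when $\beta=\eta/4$ and $\eta\le1/8$;
\item it vanishes at $-\ARvec{\xi}_t$;
\item its gradient there is $(2\beta-\eta/2)\ARvec{p}_t\odot\ARvec{\xi}_t=\ARzero$.
\end{itemize}
Hence $\Phi_{t+1}\le\Phi_t$ exactly, with no residual terms and no comparison of variances across different measures. Finally, $\Phi_{T+1}\le\Phi_1=\log d$, and choosing $\theta=c_{T,i}$ gives $\eta R_{T,i}\le\log d$, i.e.\ alternating regret at most $8\log d$.
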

We also show that for any $T \ge d-1$, an adaptive adversary could force the alternating regret to be at least $\AltReg^T\ge 2 \log d -O(1)$ (\Cref{thm:lower}). Thus the minimax-optimal alternating regret for the $d$-expert problem is $\Theta(\log d)$. 

The AA-Hedge algorithm (\Cref{alg:variance hedge}) is a variant of the Hedge algorithm with non-trivial modifications. It has been shown that the vanilla Hedge algorithm can not achieve $o(T^{1/3})$ alternating regret~\citep{hait2025alternating}. In round $t$, \Cref{alg:variance hedge} fully exploits the fact that the known loss $\ARvec{c}_{t-1}$ is part of that round's loss. The played strategy $\ARvec{x}_t$ guarantees for any unknown $\ARvec{c}_t$, the instant loss $(\ARvec{c}_{t-1} + \ARvec{c}_t)^\top \ARvec{x}_t$ can be charged on a potential function without any additional terms. A potential analysis similar to that of the Hedge algorithm then concludes the constant alternating regret bound in general. 

\paragraph{Online convex optimization.} 
Motivated by the constant regret results for OLO over the simplex, we then study the general setting of online convex optimization over $d$-dimensional compact convex sets. We recall that the previous best is the $\widetilde{O}(d^{2/3}T^{1/3})$ regret by~\citet{hait2025alternating}, achieved by the continuous Hedge algorithm. In \Cref{sec:oco}, we extend \Cref{alg:variance hedge} and our analysis to the general online convex optimization setting with convex action set $\+X$, using ideas similar to the continuous Hedge algorithm. The resulting continuous AA-Hedge (\Cref{alg:main}) enjoys $O(d\log (1+T/d))$ alternating regret.
\begin{theorem}[Informal, alternating regret upper bound for OCO]
    For online convex optimization over any compact convex set $\+X \subseteq \-R^d$ with any bounded convex losses,  the continuous AA-Hedge (\Cref{alg:main}) has $O(d\log (1+T/d))$ alternating regret.
\end{theorem}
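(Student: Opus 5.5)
The plan is to lift the potential-based analysis of AA-Hedge behind \Cref{thm:main} from the simplex to a continuous exponential-weights distribution over $\+X$, and then pay a volume-type price for comparing against a point rather than a distribution.

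\textbf{Setting up the potential.} Write $F_t=\sum_{s\le t} f_s$ with $F_0=0$. For $\eta>0$ define the log-partition potential
\begin{align*}
\Phi_t=-\tfrac1\eta\log\int_{\+X}\exp\bigl(-\eta\,(2F_{t}(\ARvec y)-f_t(\ARvec y))\bigr)\dd\ARvec y .
\end{align*}
The key observation is that $\sum_{t}(f_{t-1}+f_t)=2F_T-f_T$. So the cumulative alternating loss of a fixed point $\ARvec x$ is $2F_T(\ARvec x)-f_T(\ARvec x)$, and this is exactly the exponent appearing in $\Phi_T$.

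\textbf{Per-round step.} In round $t$ the learner already knows $f_{t-1}$ but not $f_t$. We choose a distribution $p_t$ on $\+X$ that mirrors the simplex choice of AA-Hedge. The requirement is that, for every bounded convex $f_t$,
\begin{align*}
\E_{\ARvec y\sim p_t}\bigl[f_{t-1}(\ARvec y)+f_t(\ARvec y)\bigr]\le \Phi_t-\Phi_{t-1}.
\end{align*}
The learner then plays $\ARvec x_t=\E_{p_t}[\ARvec y]$, and Jensen's inequality (convexity of $f_{t-1}+f_t$) transfers the bound to the played point. In \Cref{thm:main} this "charge without additional terms" inequality is proved for linear losses over the vertices $\Delta_d$. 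Here I would reprove it pointwise in $\ARvec y$. The increment $\Phi_t-\Phi_{t-1}$ changes the exponent by $f_{t-1}(\ARvec y)+f_t(\ARvec y)$ at each $\ARvec y$, with values in $[-1,1]$ per loss. So the same one-dimensional inequality used for experts applies with the continuum of points $\ARvec y$ playing the role of experts, provided $\eta$ is a constant matching the discrete analysis. I expect this to be the main obstacle. The expert argument must be checked to rely only on per-expert bounded losses and not on finiteness or linearity; I expect it does, since the expert argument is itself pointwise per expert. If the inequality only holds for linear $f_t$, I would first linearize through subgradients, but then $f_{t-1}$ would be replaced by a linear surrogate, and one would need to check that knowing $f_{t-1}$ still suffices.

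\textbf{Telescoping and the comparator.} Summing over $t$ gives alternating loss at most $\Phi_T-\Phi_0$, where $\Phi_0=-\frac1\eta\log\mathrm{vol}(\+X)$. To lower-bound $-\Phi_T$ against a fixed $\ARvec x^\star$, use the standard continuous-Hedge argument. Let $K=(1-\gamma)\ARvec x^\star+\gamma\+X$, which has volume $\gamma^d\mathrm{vol}(\+X)$. By convexity and boundedness, every $\ARvec y\in K$ satisfies $2F_T(\ARvec y)-f_T(\ARvec y)\le 2F_T(\ARvec x^\star)-f_T(\ARvec x^\star)+O(\gamma T)$. Restricting the integral to $K$ then gives
\begin{align*}
\AltReg^T(\ARvec x^\star)\le \tfrac{d}{\eta}\log\tfrac1\gamma+O(\gamma T).
\end{align*}
Choosing $\gamma=d/(d+T)$ and a constant $\eta$ yields $O(d\log(1+T/d))$, as claimed.

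Finally, sampling or computing $\ARvec x_t$ (log-concave integration) is a computational matter and does not affect the regret bound.
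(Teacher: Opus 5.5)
There is a genuine gap. Your outer framework matches the paper's: Jensen to pass from a distribution to its barycenter, then the shrunken copy $(1-\gamma)\ARvec{x}^\star+\gamma\X$ with volume factor $\gamma^d$, giving $d\log(1/\gamma)/\eta+O(\gamma T)$. The problem is the per-round step you defer as the ``main obstacle.'' It is false for the potential you wrote down, and no choice of $p_t$ can rescue it.

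Your $\Phi_t$ is the plain Hedge potential on the effective losses $g_t=f_{t-1}+f_t$. So $\Phi_t-\Phi_{t-1}=-\frac1\eta\log\E_{w_t}[e^{-\eta g_t}]$ with $w_t\propto e^{-\eta(2F_{t-1}-f_{t-1})}$. Take $t=1$, where $f_0\equiv0$, and let $f_1=\pm h$ for a non-constant linear $h$ with values in $[-1,1]$. Both signs are admissible convex losses. Adding the two required inequalities gives, for every $p_1$,
\[
0=\E_{p_1}[h]+\E_{p_1}[-h]\le-\tfrac1\eta\log\bigl(\E_{w_1}[e^{-\eta h}]\,\E_{w_1}[e^{\eta h}]\bigr).
\]
But Cauchy--Schwarz gives $\E_{w_1}[e^{-\eta h}]\,\E_{w_1}[e^{\eta h}]>1$, a contradiction. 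The plain softmin increment always leaves a second-order (variance) remainder, and summing those remainders over $T$ rounds is what produces $\sqrt T$- or $T^{1/3}$-type bounds rather than $\log T$. Linearizing through subgradients does not help, since the counterexample is already linear.

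The missing idea is the modified potential that drives AA-Hedge:
\[
\max_{\theta}\log\int_{\X}\exp\bigl(\eta R_{t-1}(\ARvec{u})-\beta(f_{t-1}(\ARvec{u})-\theta)^2\bigr)\mu(\dd\ARvec{u}),\qquad \beta=\eta/4,\ \eta\le1/8.
\]
Writing $\xi_t=f_{t-1}-\theta_t$, its increment contains $\beta(\xi_t^2-\xi_{t+1}^2)$, and the two pieces do different jobs:
\begin{itemize}
\item the new term $-\beta\xi_{t+1}^2$ absorbs the variance created by the unknown $f_t$;
\item the old term $+\beta\xi_t^2$ is paid for by the known $f_{t-1}$, through the specific choices $P_t\propto e^{-\eta L_{t-1}-\beta\xi_t^2}\mu$, $\theta_t=\E_{P_t}[f_{t-1}]$, and the tilted measure $Q_t=(1-\xi_t/2)P_t$ whose barycenter is played.
\end{itemize}
The one-step inequality (\Cref{lem:functional}) then holds because
\[
z\mapsto\log\E_P\exp\bigl\{\eta(\E_Q[\xi+z]-\xi-z)+\beta(\xi^2-z^2)\bigr\}
\]
is concave over $[-2,2]$-valued functions and is stationary with value $0$ at $z=-\xi$. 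Concavity needs the $-\beta z^2$ term. Without it the map is a log-sum-exp of affine functions, hence convex, and the stationary point would be a minimum rather than a maximum.

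Your instinct that the expert argument is pointwise and extends to a continuum is right, but it extends for this potential, not for yours. With the correct potential, the endpoint leaves an extra $\beta\operatorname{Var}_\rho(f_T)/\eta\le1/4$, which is harmless. The rest of your argument (telescoping and the shrinking-set comparator) then goes through as in the proof of \Cref{cor:pointwise}.
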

For the special case of OLO over a convex polytope with $N$ vertices, the continuous AA-Hedge algorithm enjoys a better alternating regret of $8 \log N$, which also recovers \Cref{thm:main}.

Somewhat surprisingly, we show that the $\Omega(\log T)$ dependence is necessary even when the set $\+X$ is the $2$-dimensional unit ball (\Cref{thm:oco-lower}). Using this lower bound, we further establish a matching lower bound for OCO over $d$-dimensional convex sets (\Cref{thm:oco-lower-dim}).

\begin{theorem}[Informal, alternating regret lower bound for OCO]
    There exists a compact convex set $\+X$ that lies in the $d$-dimensional unit $\ell_2$ ball such that for any OCO algorithm, there exists an oblivious adversary such that the algorithm suffers $\Omega(d\log(1+T/d))$ alternating regret
\end{theorem}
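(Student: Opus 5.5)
The plan is to reduce the $d$-dimensional statement to a $2$-dimensional one and then handle the $2$-dimensional case, which carries all the difficulty. The reduction uses a product of scaled disks together with a phase decomposition in time. For the base case I would construct a randomized oblivious adversary and apply Yao's principle, so it suffices to lower bound the learner's expected alternating regret against this distribution. The key point is that knowing $\ARvec{c}_{t-1}$ is useless when the fresh randomness in $\ARvec{c}_t$ is a symmetric sign. The ball's curvature then forces $\Omega(1/t)$ regret per round, and these sum to $\Omega(\log T)$.

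\textbf{Two-dimensional ball (\Cref{thm:oco-lower}).} Take $\+X$ to be the unit disk and linear losses $\ARvec{c}_t=\ARvec{a}_t+\epsilon_t \ARvec{w}_t$. Here $\ARvec{a}_t$ is a fixed deterministic drift, for instance a constant vector $\ARvec{a}$ with $\|\ARvec{a}\|=1/2$, so that the cumulative loss $\ARvec{L}_t=\sum_{s\le t}(\ARvec{c}_{s-1}+\ARvec{c}_s)$ has norm $\Theta(t)$. The $\epsilon_t$ are i.i.d.\ Rademacher signs, and $\ARvec{w}_t$ is a unit-order vector orthogonal to $\ARvec{a}$; scale everything so losses lie in $[-1,1]$.

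The comparator term equals $-\|\ARvec{L}_T\|$. I would write it as the telescoping sum $-\sum_t(\|\ARvec{L}_t\|-\|\ARvec{L}_{t-1}\|)$. The learner's loss in round $t$ is $(\ARvec{c}_{t-1}+\ARvec{c}_t)^\top\ARvec{x}_t$, where $\ARvec{x}_t$ is measurable with respect to $\epsilon_1,\dots,\epsilon_{t-1}$. Taking conditional expectations, $\E[\epsilon_t\ARvec{w}_t^\top\ARvec{x}_t]=0$. The remaining part, $(\ARvec{c}_{t-1}+\ARvec{a}_t)^\top\ARvec{x}_t$, is at least $-\|\ARvec{c}_{t-1}+\ARvec{a}_t\|$.

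The comparator increment $\|\ARvec{L}_{t-1}+\ARvec{c}_{t-1}+\ARvec{a}_t+\epsilon_t\ARvec{w}_t\|-\|\ARvec{L}_{t-1}\|$ needs a careful second-order expansion of the norm. Averaging over $\epsilon_t$, the orthogonal perturbation contributes $+\Omega(\|\ARvec{w}\|^2/\|\ARvec{L}_{t-1}\|)=\Omega(1/t)$ through the tangential curvature of the norm. The first-order term must be matched against the learner's best deterministic response. I would handle this first-order matching by choosing the drift so that $\ARvec{c}_{t-1}+\ARvec{a}_t$ is nearly parallel to $\ARvec{L}_{t-1}$, making the first-order terms cancel up to $O(1/t)$ errors whose sign I control. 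Summing over $t$ then gives $\E[\AltReg^T]\ge \Omega(\log T)$.

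\textbf{Main obstacle.} This first-order matching is where I expect the main difficulty. The learner knows $\ARvec{c}_{t-1}$, which includes the previous sign $\epsilon_{t-1}\ARvec{w}$. So the previous sign appears in both the learner's round-$t$ loss and the comparator increment, and it must not let the learner "pre-pay" the curvature. If the naive drift does not cancel cleanly, I would first try alternating the noise direction, using $\ARvec{w}_t$ for odd $t$ and $0$ for even $t$. Then each random sign enters exactly one pair-sum that the learner cannot see in advance, and the curvature gain per noisy round is still $\Omega(1/t)$.

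\textbf{Lifting to dimension $d$ (\Cref{thm:oco-lower-dim}).} Let $k=\lfloor d/2\rfloor$ and let $\+X=\prod_{j=1}^k \tfrac{1}{\sqrt k}B_2$ be a product of $k$ disks of radius $1/\sqrt k$; this set lies in the unit ball. Split the horizon into $k$ phases of length $T/k$. In phase $j$, run the $2$-dimensional hard instance on block $j$ only, with losses scaled by $\sqrt k$. These losses remain in $[-1,1]$, since $|\ARvec{c}^\top\ARvec{x}|\le\sqrt k\cdot\|\ARvec{x}_j\|\le 1$.

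The scaled instance is exactly the unit-disk instance, so phase $j$ yields $\Omega(\log(T/k))$ regret. Because $\+X$ is a product and the losses are linear, the best fixed comparator decomposes blockwise, and so does the learner's loss. At each phase boundary, the round with $\ARvec{c}_{t-1}$ in block $j$ and $\ARvec{c}_t$ in block $j+1$ splits into separate block terms, costing at most $O(1)$ per boundary, which is absorbed. Summing over the $k$ phases gives $\Omega(k\log(1+T/k))=\Omega(d\log(1+T/d))$. When $T<d$, use only $T$ blocks, which gives a linear-in-$T$ bound consistent with the formula.
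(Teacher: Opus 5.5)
\textbf{The lifting step is fine; the two-dimensional base case does not work.} Your lift to dimension $d$ (a product of scaled disks, one phase per block, blockwise decomposition of the comparator) is essentially the paper's argument. The gap is in the disk construction, and it goes beyond the first-order bookkeeping you flagged.

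The bound must hold for every algorithm, so the learner may know your distribution. Take $\ARvec{c}_t=\ARvec{a}+\epsilon_t\ARvec{w}$ with $\|\ARvec{a}\|=1/2$ and $\ARvec{a}\perp\ARvec{w}$.
\begin{itemize}
\item For $t\ge2$ the learner can play $\ARvec{x}_t=-(\ARvec{c}_{t-1}+\ARvec{a})/\|\ARvec{c}_{t-1}+\ARvec{a}\|$ and pay $-\sqrt{1+\|\ARvec{w}\|^2}$ per round in expectation.
\item Meanwhile $\ARvec{L}_T=(2T-1)\ARvec{a}+S\ARvec{w}$ with $\E[S^2]\le 4T$, so $\E\|\ARvec{L}_T\|\le T+2\|\ARvec{w}\|^2$.
\item Hence $\E[\AltReg^T]\le-(T-1)\bigl(\sqrt{1+\|\ARvec{w}\|^2}-1\bigr)+O(1)$, which is linearly negative.
\end{itemize}
Your odd/even fallback fails the same way. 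On even rounds $\ARvec{c}_{t-1}+\ARvec{c}_t=2\ARvec{a}+\epsilon_{t-1}\ARvec{w}$ is fully known, so the learner gains $\Theta(\|\ARvec{w}\|^2)$ on half the rounds.

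The underlying obstruction is one of scale. Orthogonal noise of size $\sigma$ placed in $\ARvec{c}_{t-1}$ is seen before round $t$ and is worth $\Theta(\sigma^2)$ to the learner. The curvature it buys the comparator is only $\Theta(\sigma^2/t)$. With a deterministic drift, making $\ARvec{c}_{t-1}+\ARvec{a}_t$ parallel to $\ARvec{L}_{t-1}$ up to $O(1/t)$ forces $\sigma=O(1/\sqrt{t})$, and then the curvature gain is summable.

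In fact, even for standard regret, a known drift plus orthogonal noise gives only $O(1)$ regret on the disk: the learner just plays $-\ARvec{a}/\|\ARvec{a}\|$. The $\Theta(1/t)$ curvature term you count is cancelled by the first-order loss from the tilt of $\ARvec{L}_{t-1}$ away from $\ARvec{a}$. So the claimed $\Omega(1/t)$ per round never materializes.

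\textbf{What is missing is a way to make the learner's advance knowledge of $f_{t-1}$ worthless.} The paper achieves this with non-linear convex losses. The statement is about OCO, and the paper makes no $\Omega(\log T)$ claim for linear losses on the disk. Its mechanism has three parts.
\begin{itemize}
\item All losses are nonnegative, so $f_{t-1}(\ARvec{x}_t)\ge 0$ is simply discarded, and one lower bounds $\sum_t f_t(\ARvec{x}_t)-2\sum_t f_t(\ARvec{u}_\star)$. Counting the comparator twice is harmless because its loss is tiny.
\item In block $k$, each round is the affine loss $\ell_{\ARvec{u}_k}$ with probability $1-p_k$, or the hinge $h_{\ARvec{u}_k,\alpha_k}$ with probability $p_k\asymp\alpha_k^2$. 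The pointwise bound $(1-p_k)\ell_{\ARvec{u}_k}+p_k h_{\ARvec{u}_k,\alpha_k}\ge p_k$ makes every learner pay at least $n_kp_k\ge\nu$ per block.
\item The center rotates by $\alpha_k=\sqrt{\nu}/4^{k-1}$ exactly when a rare loss occurred. So the endpoint $\ARvec{u}_{K+1}$ has zero rare loss and $O(\nu^2)$ expected common loss per block, over $K=\Theta(\log T)$ blocks.
\end{itemize}
The hinge is essential. A nonnegative affine function with values in $[0,1]$ on the disk that vanishes at $\ARvec{u}_{k+1}$ is at most $(1-\cos\alpha_k)/2$ at $\ARvec{u}_k$. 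So linear losses cannot be large at the center and still vanish after a small rotation.

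\textbf{A secondary issue: phase boundaries.} The paper's nonnegative, factor-$2$ form also makes the phase decomposition exact. In your version, each of the $\Theta(d)$ phase boundaries can cost up to a constant. A per-phase bound $c_0\log(T/k)$ with small $c_0$ absorbs this only when $T/k\ge e^{\Theta(1/c_0)}$. So you would in any case need the two-dimensional bound in a boundary-robust form.
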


Our upper and lower bounds together establish the minimax-optimal $\Theta(d\log(1+T/d))$ alternating regret for OCO (\Cref{cor:oco-tight}).

\subsubsection{Fast alternating learning dynamics in games}\label{sec:alternating learning}
As an important application, our result implies in alternating learning dynamics with fast convergence in two-player games.
Consider a two-player \emph{convex game} with loss functions $u_1, u_2: \+X \times \+Y \rightarrow [-1,1]$ for the $x$-player and the $y$-player respectively. The loss $u_1(\ARvec{x},\ARvec{y})$ is convex in $\ARvec{x}$ for any $\ARvec{y} \in \+Y$ and the loss $u_2(\ARvec{x}, \ARvec{y})$ is convex in $\ARvec{y}$ for any $\ARvec{x} \in \+X$. The game is zero-sum if $u_1 = -u_2$ and otherwise is general-sum. An important subclass is \emph{normal-form games}.

\begin{example}[Normal-form games]
    In a two-player \emph{normal-form} game, $\+X$ and $\+Y$ are probability simplices and the loss functions are linear: $u_1(\ARvec{x}, \ARvec{y}) = \ARvec{x}^\top A \ARvec{y}, u_2(\ARvec{x}, \ARvec{y}) = \ARvec{x}^\top B\ARvec{y}$.  The game is \emph{zero-sum} if $A = -B$, otherwise it is \emph{general-sum}.
\end{example}
We recall the definition of approximate Nash equilibrium and coarse correlated equilibrium.
\begin{definition}[Nash equilibrium and coarse correlated equilibrium]
    A joint strategy profile $(\ARvec{x}, \ARvec{y})$ is an \emph{$\epsilon$-Nash equilibrium ($\epsilon$-NE)} if $u_1(\ARvec{x}, \ARvec{y}) \le \min_{\ARvec{x}' \in \+X} u_1(\ARvec{x}', \ARvec{y}) + \epsilon$ and $u_2(\ARvec{x}, \ARvec{y}) \le \min_{\ARvec{y}' \in \+X} u_2(\ARvec{x}, \ARvec{y}') + \epsilon$. A distribution $\sigma$ over joint strategy profiles $\+X \times \+Y$ is an \emph{$\epsilon$-coarse correlated equilibrium ($\epsilon$-CCE)} if $\-E_{(\ARvec{x},\ARvec{y})\sim \sigma}[u_1(\ARvec{x},\ARvec{y})] \le \min_{\ARvec{x}' \in \+X}\-E_{(\ARvec{x},\ARvec{y})\sim \sigma}[u_1(\ARvec{x}',\ARvec{y})] + \epsilon$ and $\-E_{(\ARvec{x},\ARvec{y})\sim \sigma}[u_2(\ARvec{x},\ARvec{y})] \le \min_{\ARvec{y}' \in \+Y}\-E_{(\ARvec{x},\ARvec{y})\sim \sigma}[u_2(\ARvec{x},\ARvec{y}')] + \epsilon$. 
\end{definition}

In a simultaneous learning dynamics, for every iteration $t \ge 1$, both players use an OLO/OCO algorithm to choose their actions $\ARvec{x}_t \in \+X$ and $\ARvec{y}_t  \in \+Y$ simultaneously and then they observe their loss functions $u_1(\cdot, \ARvec{y}_t)$ and $u_2(\ARvec{x}_t, \cdot)$. The \emph{alternating learning dynamics} is slightly different: in iteration $t$, the $x$-player chooses $\ARvec{x}_t$ first; then after seeing $\ARvec{x}_t$, the $y$-player chooses $\ARvec{y}_t$. More specifically, in each round $t \ge 1$,
\begin{itemize}
    \item[1.] the $x$-player chooses $\ARvec{x}_t$ based on $u_1(\cdot, \ARvec{y}_1), \ldots, u_1(\cdot, \ARvec{y}_{t-1})$;
    \item[2.] the $y$-player chooses $\ARvec{y}_t$ based on $u_2(\ARvec{x}_1, \cdot), \ldots, u_2(\ARvec{x}_{t-1}, \cdot), u_2(\ARvec{x}_t, \cdot)$;
\end{itemize}
Algorithms for OLO/OCO with $o(T)$ alternating regret imply convergence to approximate equilibria in two-player games, as observed by~\citep{wibisono2022alternating, cevher2023alternation} for NE in zero-sum games and by~\citep{hait2025alternating} for CCE in general-sum games. Formally, we have
\begin{theorem}[Theorem 1 and 2 in \citep{hait2025alternating}]\label{thm:alternating2Necce}
    Suppose that in the above alternating learning dynamics,  the $x$-player and the $y$-player use OCO algorithms with alternating regret bound $\AltReg^T(x)$ and $\AltReg^T(y)$, respectively. Then 
    \begin{itemize}
        \item[1.] for a zero-sum game,  the averaged strategy $(1/T\sum_{t=1}^T \ARvec{x}_t, 1/T\sum_{t=1}^T \ARvec{y}_t)$ is an $\epsilon$-NE with $\epsilon = O((\AltReg^T(x)+ \AltReg^T(y))/T)$;
        \item[2.] for a general-sum game, the uniform distribution over $\{(\ARvec{x}_t, \ARvec{y}_t), (\ARvec{x}_{t+1}, \ARvec{y}_t)\}_{t \in [T]}$ is an $\epsilon$-CCE with $\epsilon = O(\max\{\AltReg^T(x), \AltReg^T(y)\}/T)$
    \end{itemize}
\end{theorem}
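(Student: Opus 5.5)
The plan is to reduce both claims to a single bookkeeping identity. View the alternating play as one interleaved sequence $\ARvec{x}_1,\ARvec{y}_1,\ARvec{x}_2,\ARvec{y}_2,\dots$. Each player's action is then charged against the opponent's move just before it and the move just after it. The first step is to fit each player into the alternating-regret protocol with its own loss sequence.

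\textbf{Step 1: the two loss sequences.} For the $x$-player, set $f_t(\ARvec{x}) := u_1(\ARvec{x},\ARvec{y}_t)$, with $f_0 = 0$. Then $\ARvec{x}_t$ is chosen knowing $f_1,\dots,f_{t-1}$, and it pays $f_{t-1}(\ARvec{x}_t)+f_t(\ARvec{x}_t) = u_1(\ARvec{x}_t,\ARvec{y}_{t-1})+u_1(\ARvec{x}_t,\ARvec{y}_t)$. This is exactly the alternating-regret protocol.

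For the $y$-player, $\ARvec{y}_t$ is chosen after seeing $u_2(\ARvec{x}_t,\cdot)$. So the known loss at round $t$ should be $g_{t-1}(\ARvec{y}) := u_2(\ARvec{x}_t,\ARvec{y})$, and the unknown one should be $g_t(\ARvec{y}) = u_2(\ARvec{x}_{t+1},\ARvec{y})$. The only mismatch is the boundary: the convention $g_0=0$ conflicts with $u_2(\ARvec{x}_1,\cdot)$. I would handle this by running the $y$-player's algorithm on a shifted horizon of $T+1$ rounds with a dummy first action. An alternative is to note that the boundary loss terms are bounded in $[-1,1]$. Either way the discrepancy is $O(1)$, which is absorbed into the $O(\cdot)$ in the statement. (I expect this indexing to be the only fiddly point.)

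\textbf{Step 2: zero-sum case.} Write $u=u_1=-u_2$ and add the two regret definitions. The realized losses of the $x$-player involve the pairs $(\ARvec{x}_t,\ARvec{y}_t)$ for $t\in[T]$ and $(\ARvec{x}_t,\ARvec{y}_{t-1})$ for $t=2,\dots,T$. Those of the $y$-player involve, with the opposite sign, $(\ARvec{x}_t,\ARvec{y}_t)$ and $(\ARvec{x}_{t+1},\ARvec{y}_t)$. After reindexing, these cancel exactly up to $O(1)$ boundary terms. What remains gives
\[
\max_{\ARvec{y}}\sum_t\bigl(u(\ARvec{x}_t,\ARvec{y})+u(\ARvec{x}_{t+1},\ARvec{y})\bigr)-\min_{\ARvec{x}}\sum_t\bigl(u(\ARvec{x},\ARvec{y}_{t-1})+u(\ARvec{x},\ARvec{y}_t)\bigr)\le \AltReg^T(x)+\AltReg^T(y)+O(1).
\]
Next I apply Jensen: $u$ is convex in $\ARvec{x}$ and concave in $\ARvec{y}$, since $u_2=-u$ is convex in $\ARvec{y}$. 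This shows that each sum is within $O(1)$ of $2T\,u(\bar{\ARvec{x}},\ARvec{y})$, respectively $2T\,u(\ARvec{x},\bar{\ARvec{y}})$, with the $O(1)$ coming from dropping or duplicating one boundary term. Dividing by $2T$ bounds the duality gap of $(\bar{\ARvec{x}},\bar{\ARvec{y}})$ by $O((\AltReg^T(x)+\AltReg^T(y))/T)$. Finally, the duality gap dominates each player's unilateral deviation gain, so the averaged profile is an $\epsilon$-NE.

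\textbf{Step 3: general-sum case.} Let $\sigma$ be uniform over the $2T$ pairs $(\ARvec{x}_t,\ARvec{y}_t)$ and $(\ARvec{x}_{t+1},\ARvec{y}_t)$. For the $y$-player, $2T\bigl(\E_\sigma[u_2]-\E_\sigma[u_2(\ARvec{x},\ARvec{y}')]\bigr)$ is exactly its alternating regret against $\ARvec{y}'$ with losses $g_{t-1}+g_t$. For the $x$-player, the same quantity against $\ARvec{x}'$ collects $u_1(\ARvec{x}_t,\ARvec{y}_t)+u_1(\ARvec{x}_{t+1},\ARvec{y}_t)$. Regrouping by $\ARvec{x}_t$ turns this into $\sum_t (f_{t-1}+f_t)(\ARvec{x}_t)$, up to the boundary terms $\ARvec{x}_{T+1}$ and $\ARvec{y}_0$, which contribute $O(1)$. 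Hence each player's CCE deviation gain is at most $(\AltReg^T+O(1))/(2T)$, which gives $\epsilon=O(\max\{\AltReg^T(x),\AltReg^T(y)\}/T)$. No convexity is needed in this case, since the CCE condition is stated in expectation over $\sigma$.
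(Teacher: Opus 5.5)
The paper does not prove this statement. It is imported from Hait et al.\ (their Theorems 1 and 2) and used as a black box, so there is no in-paper argument to compare yours against. Your proof is correct, and it is the standard bookkeeping behind those results. You put each player into the alternating protocol, with a dummy first round for the $y$-player. In the zero-sum case the two players' realized alternating losses cancel after reindexing. In the general-sum case each player's deviation gain under $\sigma$ regroups into its alternating regret divided by $2T$, up to boundary terms involving $y_0$ and $x_{T+1}$.

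There are two small points of precision.

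First, in the zero-sum step, Jensen does not make the comparator sums `within $O(1)$' of $2T\,u(\bar{x},y)$ and $2T\,u(x,\bar{y})$. It gives only one-sided bounds:
\begin{itemize}
\item[] the $y$-side sum is at least $2T\,u(\bar{x},y)-O(1)$, by convexity in $x$;
\item[] the $x$-side sum is at most $2T\,u(x,\bar{y})+O(1)$, by concavity in $y$.
\end{itemize}
These are exactly the directions you need, so $2T\,(\max_{y}u(\bar{x},y)-\min_{x}u(x,\bar{y}))\le\AltReg^T(x)+\AltReg^T(y)+O(1)$ still follows.

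Second, absorbing the additive $O(1)$ into $O(\cdot/T)$ implicitly uses that the regret bounds are $\Omega(1)$. This holds for any valid worst-case bound on an action set with more than one point. An adversary can play $f_1=\dots=f_{T-1}\equiv 0$ and then an affine $f_T$ with range $[-1,1]$, signed so that $f_T(x_T)\ge 0$; this forces alternating regret at least $1$.
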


As a corollary, our results imply alternating learning dynamics with fast convergence in games. Notably, we give the first learning dynamics with $O(1/T)$ convergence to CCE in two-player general-sum normal-form games, while existing results suffer additional $\mathrm{poly}\log T$ factors. 
\begin{corollary}[Fast convergence in two-player games]
    For two-player normal-form games where each player has at most $d$ actions, there is an alternating learning dynamics with $O(\log d/T)$ convergence to NE in zero-sum games and $O(\log d/T)$ convergence to CCE in general-sum games.
\end{corollary}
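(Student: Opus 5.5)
The corollary follows by plugging \Cref{thm:main} into \Cref{thm:alternating2Necce}. I would instantiate both players of the alternating learning dynamics with the AA-Hedge algorithm (\Cref{alg:variance hedge}) on their own simplices, $\Delta_{d_1}$ and $\Delta_{d_2}$ with $d_1,d_2\le d$. The point to check is that each player's sequence of observed losses is a legitimate input to \Cref{thm:main}.

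\textbf{The $x$-player.} In round $t$ the $x$-player faces the linear loss $\ARvec{x}\mapsto \ARvec{x}^\top A\ARvec{y}_t$, with loss vector $\ARvec{c}_t = A\ARvec{y}_t$. Since the entries of $A$ lie in $[-1,1]$ and $\ARvec{y}_t\in\Delta_{d_2}$, we have $\ARvec{c}_t\in[-1,1]^{d_1}$. The vector $\ARvec{c}_t$ depends on $\ARvec{x}_1,\ldots,\ARvec{x}_t$ through the $y$-player's algorithm, so the $x$-player faces an adaptive adversary. \Cref{thm:main} explicitly allows adaptively chosen losses. The $x$-player picks $\ARvec{x}_t$ knowing $\ARvec{c}_1,\dots,\ARvec{c}_{t-1}$, which is exactly the OLO protocol, so $\AltReg^T(x)\le 8\log d_1$.

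\textbf{The $y$-player.} The $y$-player chooses $\ARvec{y}_t$ after seeing $\ARvec{x}_t$, so it sees one more loss than the OLO protocol requires. I would simply have it run AA-Hedge on the loss vectors $\ARvec{b}_t = B^\top\ARvec{x}_t$ (or $-A^\top\ARvec{x}_t$ in the zero-sum case), ignoring the extra information. The indexing of rounds must match the alternating regret used in \Cref{thm:alternating2Necce} for the $y$-player. Concretely, the $y$-player's $t$-th decision is paired with losses $\ARvec{b}_t,\ARvec{b}_{t+1}$, i.e.\ a shifted sequence $\ARvec{c}'_s=\ARvec{b}_{s}$ with the first loss known in advance. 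I would verify that this shift costs at most an additive $O(1)$ boundary term, since losses are bounded in $[-1,1]$. With that, \Cref{thm:main} gives $\AltReg^T(y)\le 8\log d_2+O(1)$.

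\textbf{Conclusion and main obstacle.} Applying \Cref{thm:alternating2Necce} yields $\epsilon = O((\log d_1+\log d_2+1)/T)=O(\log d/T)$ for NE in zero-sum games (average strategies) and for CCE in general-sum games (the uniform distribution over the $2T$ profiles). For $d\ge 2$ the additive constant is absorbed into $O(\log d)$. The only delicate step is the index bookkeeping for the $y$-player, i.e.\ matching its information pattern to the alternating regret definition in \citep{hait2025alternating}. I expect this to be routine because that theorem is stated for arbitrary OCO algorithms with the given alternating regret bounds.
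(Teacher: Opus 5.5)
Your route is the paper's: the corollary is stated there as an immediate consequence of letting both players run AA-Hedge and plugging the $8\log d$ bound of \Cref{thm:main} into \Cref{thm:alternating2Necce}. Your checks for the $x$-player are exactly what is needed: the loss vectors $A\ARvec{y}_t$ lie in $[-1,1]^{d_1}$, and the dependence on $\ARvec{x}_t$ is covered by the adaptive adversary in \Cref{thm:main}.

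\textbf{One point to correct.} The phrase ``ignoring the extra information'' for the $y$-player contradicts the pairing you write down next. Read literally, it breaks the argument.

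Both the CCE distribution over $\{(\ARvec{x}_t,\ARvec{y}_t),(\ARvec{x}_{t+1},\ARvec{y}_t)\}$ and the zero-sum telescoping need the $y$-player to control $\sum_t(\ARvec{b}_t+\ARvec{b}_{t+1})^\top\ARvec{y}_t$ against a fixed comparator. Suppose instead $\ARvec{y}_t$ were computed from $\ARvec{b}_1,\ldots,\ARvec{b}_{t-1}$ only. Then AA-Hedge would control $\sum_t(\ARvec{b}_{t-1}+\ARvec{b}_t)^\top\ARvec{y}_t$. The learner-side discrepancy is $\sum_t(\ARvec{b}_{t+1}-\ARvec{b}_{t-1})^\top\ARvec{y}_t$, which up to boundary terms equals $\sum_t\ARvec{b}_t^\top(\ARvec{y}_{t-1}-\ARvec{y}_{t+1})$. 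This is not a boundary term, and \Cref{thm:main} does not control it.

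\textbf{The fix.} The $y$-player must use $\ARvec{b}_t$ as AA-Hedge's known previous loss when choosing $\ARvec{y}_t$. Either of the following works.
\begin{itemize}
\item[1.] Take $\ARvec{y}_t$ to be AA-Hedge's round-$(t+1)$ output on the sequence $\ARvec{b}_1,\ARvec{b}_2,\ldots$. The unplayed round-$1$ output costs at most $2$ in regret.
\item[2.] At no cost, run AA-Hedge on $\ARvec{c}_s:=\ARvec{b}_{s+1}$ for $s\ge 0$, i.e.\ with initial loss $\ARvec{c}_0:=\ARvec{b}_1$ in place of $\ARvec{0}$. This is valid because $\Phi_1\le\log d$ still holds and the monotonicity proof only uses $\ARvec{c}_{t-1}\in[-1,1]^d$.
\end{itemize}
With that reading, which is what your ``shifted sequence with the first loss known in advance'' actually describes, the argument is complete.
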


\subsection{Further related works} 
Alternation has been studied in two-player zero-sum games and the related min-max optimization problem, mostly in the unconstrained setting where the strategy set is the whole Euclidean space. There is a line of work on the convergence of alternating gradient-descent-ascent in the unconstrained setting \citep{bailey2020finite, zhang2022near, lee2024fundamental, feng2025continuoustime, shugart2025negative}, the simplex-constrained setting~\citep{nan2026on}, and manifold setting~\citep{xu2026riemannian}. The difference is that our work considers alternating regret in the more challenging setting of adversarial online learning, which implies convergence in the game setting.

\section{Constant alternating regret for OLO over the simplex}\label{sec:simplex}
In this section, we focus on the important case of online linear optimization over the simplex, i.e., the adversarial expert problem and present an algorithm with constant alternating regret.

We first introduce some notations. The action set is the $d$-dimensional probability simplex $
\Delta_d=\left\{\ARvec{x}\in\R^d:x_i\geq 0\ \text{and}\ \sum_{i=1}^d x_i=1\right\}$. We define $\ARvec{c}_0 = \ARzero$ to be the all-0 vector. In each iteration $t \ge 1$, the learner chooses $\ARvec{x}_t \in \Delta_d$ based on the history $\{\ARvec{c}_1, \ldots, \ARvec{c}_{t-1}\}$. After observing $\ARvec{x}_t$, the adversary \emph{adaptively} chooses $\ARvec{c}_t \in [-1,1]^d$. The learner then observes $\ARvec{c}_t$ and suffers loss $(\ARvec{c}_{t-1} + \ARvec{c}_t)^\top \ARvec{x}_t$. In this case, the alternating regret can be written as 
\begin{equation}\label{eq:regret}
\AltReg^T
=
\sum_{t=1}^T (\ARvec{c}_{t-1}+\ARvec{c}_t)^\top \ARvec{x}_t
-
\min_{\ARvec{x}\in\Delta_d}\sum_{t=1}^T(\ARvec{c}_{t-1}+\ARvec{c}_t)^\top \ARvec{x}.
\end{equation}
Because the objective is linear, the comparator in \cref{eq:regret} is one of the $d$ pure actions.

\subsection{The alternation-aware Hedge algorithm}
We propose the alternation-aware Hedge algorithm, which introduce a bias toward the most recent known loss vector $\ARvec{c}_{t-1}$ when choosing $\ARvec{x_t}$. The algorithm is in \Cref{alg:variance hedge}, where we define $\ARvec{L}_0 = \ARzero$ and $\ARvec{L}_{t} = \sum_{s=1}^{t} (\ARvec{c}_{s-1} + \ARvec{c}_s)$ as the effective cumulative loss for $t \ge 1$. We first show that each step in \Cref{alg:variance hedge} is well-defined and feasible: (1) $\theta_t$ is unique; (2) the played strategy $\ARvec{x}_t$ lies in the simplex $\Delta_d$. 
\begin{algorithm}[H]
\caption{alternation-aware Hedge}
\label{alg:variance hedge}
\begin{algorithmic}[1]
\State Set $\ARvec{L}_0 = \ARvec{0}$, $\eta=1/8$, and $\beta=1/32$.
\For{$t=1,2,\ldots$}
  \State Compute $\theta_t = \argmax_{\theta\in\R} \log
\sum_{i=1}^d
\exp\!\left[-\eta L_{t-1,i}-\beta(c_{t-1,i}-\theta)^2\right]$ 
  \State Set $\ARvec{p}_t$ such that $p_{t, i} \propto \exp\!\left[-\eta L_{t-1,i}-\beta(c_{t-1,i}-\theta_t)^2\right] $
  \State Play $\ARvec{x}_t$ such that $x_{t,i}
=p_{t,i}\left(1-\frac{c_{t-1,i}-\theta_t}{2}\right)$.
  \State Observe $\ARvec{c}_t$ and update $\ARvec{L}_t \gets \ARvec{L}_{t-1}+\ARvec{c}_{t-1}+\ARvec{c}_t$.
\EndFor
\end{algorithmic}
\end{algorithm}

\begin{lemma}\label{lem:center-feasible}
Assume $\beta < 1/2$. In \Cref{alg:variance hedge}, $\theta_t$ is unique and $\theta_t = \sum_{i=1}^d p_{t,i} c_{t-1,i}$. Moreover, $\ARvec{x}_t \in \Delta_d$.
\end{lemma}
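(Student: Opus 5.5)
Define $F(\theta) = \log \sum_{i=1}^d \exp[-\eta L_{t-1,i} - \beta(c_{t-1,i}-\theta)^2]$ and study it as a function of the single variable $\theta$. The plan is to show that $F$ is strictly concave and coercive, so its maximizer exists and is unique. Then read off $\theta_t$ from the first-order condition, and finally check feasibility of $\ARvec{x}_t$ directly.

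\textbf{Concavity and uniqueness.} Write $q_i(\theta) \propto \exp[-\eta L_{t-1,i} - \beta(c_{t-1,i}-\theta)^2]$ for the induced distribution. Differentiating gives
\[
F'(\theta) = 2\beta \sum_i q_i(\theta)\,(c_{t-1,i}-\theta) = 2\beta\Bigl(\sum_i q_i(\theta) c_{t-1,i} - \theta\Bigr).
\]
Differentiating again gives
\[
F''(\theta) = -2\beta + 4\beta^2 \operatorname{Var}_{q(\theta)}(c_{t-1,\cdot}).
\]
Since $c_{t-1,i}\in[-1,1]$, the variance is at most $1$ (Popoviciu: at most $(2)^2/4=1$). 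Hence $F''(\theta) \le -2\beta + 4\beta^2 = -2\beta(1-2\beta) < 0$ whenever $0<\beta<1/2$, so $F$ is strictly concave. This variance bound is the only step requiring care; it is exactly where the assumption $\beta<1/2$ enters. For $\beta \le 0$ the claim would fail, but the algorithm uses $\beta=1/32>0$.

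Existence of the maximizer follows from the derivative. $F'(\theta)>0$ for $\theta<-1$, since the weighted mean of the $c_{t-1,i}$ is at least $-1$, and $F'(\theta)<0$ for $\theta>1$. So $F$ attains its maximum on $[-1,1]$, and strict concavity makes the maximizer $\theta_t$ unique. In the degenerate case $t=1$ we have $\ARvec{c}_0=\ARzero$, and the same argument gives $\theta_1=0$.

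\textbf{Identifying $\theta_t$.} The first-order condition $F'(\theta_t)=0$, together with $p_{t,i} = q_i(\theta_t)$, gives $\theta_t = \sum_i p_{t,i} c_{t-1,i}$.

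\textbf{Feasibility of $\ARvec{x}_t$.} For nonnegativity, $|c_{t-1,i}-\theta_t| \le 2$ because both quantities lie in $[-1,1]$, so $1-(c_{t-1,i}-\theta_t)/2 \ge 0$ and hence $x_{t,i}\ge 0$. For the normalization, $\sum_i x_{t,i} = \sum_i p_{t,i} - \tfrac12\bigl(\sum_i p_{t,i}c_{t-1,i} - \theta_t\bigr) = 1 - 0 = 1$, using the identity just derived. Therefore $\ARvec{x}_t\in\Delta_d$.
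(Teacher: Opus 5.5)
Your proof is correct and follows the paper's argument essentially step for step. You use the same derivative formulas, the same bound $\operatorname{Var}\le 1$ to get strict concavity for $\beta<1/2$, the same first-order condition to identify $\theta_t$, and the same direct check of nonnegativity and normalization. The differences are cosmetic: you get existence from the sign of $F'$ outside $[-1,1]$ rather than from $F(\theta)\to-\infty$, and you state explicitly the requirement $\beta>0$, which the lemma's statement leaves implicit.
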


\begin{proof}
Define $F(\theta) = \log \sum_{i=1}^d
\exp\!\left[-\eta L_{t-1,i}-\beta(c_{t-1,i}-\theta)^2\right]$ and $\ARvec{p}(\theta) \in \Delta_d$ such that $p_i(\theta) \propto  \exp\!\left[-\eta L_{t-1,i}-\beta(c_{t-1,i}-\theta)^2\right]$. We can calculate both the first-order and the second-order derivative of $F(\theta)$:
\begin{align*}
    F'(\theta) &= 2\beta(\-E_{\ARvec{p}(\theta)}(\ARvec{c}_{t-1}) - \theta) \\
    F''(\theta) &= 4\beta^2 \cdot\operatorname{Var}_{\ARvec{p}(\theta)}(\ARvec{c}_{t-1}) - 2\beta
\end{align*}
where we define the expectation $\-E_{\ARvec{p}(\theta)}(\ARvec{c_{t-1}}) = \sum_i p_i(\theta) c_{t-1,i}$ and $\operatorname{Var}_{\ARvec{p}(\theta)}(\ARvec{c_{t-1}}) = \sum_i p_i(\theta) (c_{t-1,i} - \-E_{\ARvec{p}(\theta)}(\ARvec{c}_{t-1}))^2$. Since $c_{t-1, i} \in [-1,1]$, the variance term $\operatorname{Var}_{\ARvec{p}(\theta)}(\ARvec{c_{t-1}}) \le 1$. Since $\beta < \frac{1}{2}$, we have $F''(\theta) < 0$ and $F$ is thus strictly concave. Moreover, $F(\theta) \rightarrow -\infty$ as $|\theta| \rightarrow +\infty$. Thus $F$ has a unique maximizer $\theta_t$ and $\ARvec{p}_t = \ARvec{p}(\theta_t)$. The first-order optimality condition $F'(\theta_t) = 0$ implies $\theta_t = \sum_{i=1}^d p_{t,i} c_{t-1,i} \in [\min_i c_{t-1,i}, \max_{i} c_{t-1, i}]$.

Both $c_{t-1,i}$ and $\theta_t$ lie in $[-1,1]$, so $x_{t,i} =p_{t,i} (1-(c_{t-1,i}-\theta_t)/2)\geq0$ for all $i \in [d]$. Moreover, $\theta_t = \sum_{i=1}^d p_{t,i} c_{t-1,i} $ implies $\sum_{i=1}^d x_{t,i}= 1$. Hence $\ARvec{x}_t\in\Delta_d$.
\end{proof}

\subsection{A potential proof of $O(\log d)$ alternating regret}
We use a potential function argument to bound the alternating regret. Define the alternating regret vector $\ARvec{R}_{t}$ such that $R_{t,i} = \sum_{s=1}^{t} (\ARvec{c}_{s-1}+\ARvec{c}_s)\top \ARvec{x}_s - \sum_{s=1}^t(c_{s-1, i} + c_{s,i})$ for all $i \in [d]$. 

\paragraph{Potential function.} The potential function is
\begin{equation}\label{eq:potential}
\Phi_t:=\max_{\theta\in\R}\log \left(
\sum_{i=1}^d
\exp\!\left[
\eta R_{t-1,i}-\beta(c_{t-1,i}-\theta)^2
\right]\right) = \log \left(\sum_{i=1}^d
\exp\!\left[
\eta R_{t-1,i}-\beta(c_{t-1,i}-\theta_t)^2
\right]\right).
\end{equation}
The equality holds by definition of $\theta_t$ in \Cref{alg:variance hedge} and the fact that constant terms do not affect the maximizer. 
Initially, $\Phi_1=\log d$.

\begin{theorem}[Monotonicity of potential]\label{thm:potential}
Assume $\eta \le \frac{1}{8}$ and $\beta = \eta/4$. For every round $t \ge 1$, we have $\Phi_{t+1}\leq\Phi_t \le \log d$.
\end{theorem}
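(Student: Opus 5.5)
The plan is to prove the one-step inequality $\Phi_{t+1}\le\Phi_t$ directly from the definitions. The bound $\Phi_t\le\log d$ then follows by induction from $\Phi_1=\log d$. Since $\Phi_{t+1}$ is a maximum over $\theta$, it suffices to show that for every $\theta\in\R$,
\begin{equation*}
\sum_{i=1}^d \exp\!\left[\eta R_{t,i}-\beta(c_{t,i}-\theta)^2\right]\le \sum_{i=1}^d \exp\!\left[\eta R_{t-1,i}-\beta(c_{t-1,i}-\theta_t)^2\right].
\end{equation*}
In this form the left side is a softmax over $\theta$, not a maximum. So I will in fact need it for all $\theta$, or else restrict to showing it at the maximizing $\theta$ via a bound that holds uniformly.

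I will write $R_{t,i}=R_{t-1,i}+\ell_t-c_{t-1,i}-c_{t,i}$ with $\ell_t=(\ARvec{c}_{t-1}+\ARvec{c}_t)^\top\ARvec{x}_t$. Dividing both sides by the right-hand side turns the claim into
\begin{equation*}
\sum_i p_{t,i}\exp\!\left[\eta(\ell_t-c_{t-1,i}-c_{t,i})+\beta a_i^2-\beta(c_{t,i}-\theta)^2\right]\le 1 \quad\text{for all }\theta,
\end{equation*}
where $a_i=c_{t-1,i}-\theta_t$. By \Cref{lem:center-feasible}, $\sum_i p_{t,i}a_i=0$ and $|a_i|\le 2$.

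Next I will put $\mu=\sum_i p_{t,i}c_{t,i}$ and $u_i=c_{t,i}-\mu$, and expand $\ell_t$ using $x_{t,i}=p_{t,i}(1-a_i/2)$:
\begin{equation*}
\ell_t=\theta_t+\mu-\tfrac12 V_a-\tfrac12 C,\qquad V_a=\sum_i p_{t,i}a_i^2,\quad C=\sum_i p_{t,i}a_iu_i .
\end{equation*}
The exponent then becomes
\begin{equation*}
z_i=-\eta(a_i+u_i)-\tfrac{\eta}{2}(V_a+C)+\beta a_i^2-\beta(u_i+\mu-\theta)^2 .
\end{equation*}
Its linear part $-\eta(a_i+u_i)$ has $p_t$-mean zero. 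Because $|a_i|,|u_i|\le 2$ and $\eta\le1/8$, every $z_i$ is $O(1)$ in absolute value. I will then use $e^{z}\le 1+z+z^2$ (valid for $|z|\le 1$, after checking the ranges) and bound $\sum_i p_{t,i}e^{z_i}$ by
\begin{equation*}
1-\tfrac{\eta}{2}(V_a+C)+\beta V_a-\beta\sum_i p_{t,i}(u_i+\mu-\theta)^2+\eta^2\sum_i p_{t,i}(a_i+u_i)^2+\text{(higher-order terms)}.
\end{equation*}

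The main obstacle is showing this is $\le 1$ when $\beta=\eta/4$. The negative first-order drift is $-\tfrac{\eta}{4}V_a$, coming from $-\tfrac{\eta}{2}V_a+\beta V_a$, together with $-\beta V_u$ at best, i.e.\ when $\theta$ is near $\mu$. These must dominate three things: the sign-indefinite cross term $-\tfrac{\eta}{2}C$, the second-order terms $\eta^2(V_a+2C+V_u)$, and the cross terms between $\beta a_i^2$ and the linear part in $z_i^2$.

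A crude bound $|C|\le\tfrac12(V_a+V_u)$ exactly cancels the drift, so I cannot use it. Instead, I will keep the quadratic form in $(a,u)$ intact and show that
\begin{equation*}
-\tfrac{\eta}{4}V_a-\tfrac{\eta}{2}C-\tfrac{\eta}{4}V_u+O(\eta^2)(V_a+V_u)
\end{equation*}
is nonpositive, using the freedom in $\theta$ where it helps. It is possible that this only closes if I exploit the exact algebraic structure of $\ell_t$ rather than a generic Taylor bound. My first attempt at that will be to complete the square in $a_i+u_i$, since the linear term is $-\eta(a_i+u_i)$, and bound the exponential moment of $-\eta(a_i+u_i)$ by a Hoeffding/Bernstein-type inequality: $\sum_i p_{t,i}e^{-\eta w_i}\le \exp\!\left(\tfrac{\eta^2}{2}(1+O(\eta))V_w\right)$ with $w_i=a_i+u_i$. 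I will then verify that the deterministic negative terms absorb $\tfrac{\eta^2}{2}V_w$ when $\eta\le1/8$ and $\beta=\eta/4$. Whether this gives the right sign with these constants has to be checked; it is the step where the argument could fail.
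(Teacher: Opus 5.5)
\textbf{There is a genuine gap: the perturbative step cannot prove the statement at the threshold $\eta\le 1/8$.} Your reduction to $\sum_i p_{t,i}e^{z_i}\le1$ and your formulas for $\ell_t$ and $z_i$ are correct. One correction on $\theta$: there is no freedom in $\theta$ to exploit. Since $\Phi_{t+1}$ is a maximum, you need the bound at the worst $\theta$. It suffices to take $\theta=\theta_{t+1}\in[-1,1]$, which also keeps the $z_i$ bounded. The step fails for two reasons.

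First, your target $-\tfrac{\eta}{4}V_a-\tfrac{\eta}{2}C-\tfrac{\eta}{4}V_u+O(\eta^2)(V_a+V_u)\le0$ is false. Your drift equals exactly $-\tfrac{\eta}{4}\sum_i p_{t,i}(a_i+u_i)^2$, which vanishes at $u=-a$, while $V_a+V_u$ does not. In fact, when $u=-a$ and $\theta=\mu$, every $z_i=0$ and the ratio is exactly $1$. So the inequality is tight on a whole family of configurations, and every error term must be controlled by $w:=a+u$ and $\delta:=\mu-\theta$. With $\beta=\eta/4$ one gets
\[
z_i=-\eta\Bigl[w_i\bigl(1-\tfrac{a_i}{2}\bigr)+\tfrac12{\textstyle\sum_j}p_{t,j}a_jw_j-\tfrac12a_i\delta+\tfrac14(w_i+\delta)^2\Bigr],\qquad \sum_ip_{t,i}z_i=-\tfrac{\eta}{4}\bigl(V_w+\delta^2\bigr).
\]

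Second, even with this bookkeeping the constants cannot close at $\eta=1/8$. At $\delta=0$,
\[
\log\sum_ip_{t,i}e^{z_i}=-\tfrac{\eta}{4}V_w+\tfrac{\eta^2}{2}\operatorname{Var}_{\ARvec{p}_t}\bigl(w\odot(\one-a/2)\bigr)+O(\lVert w\rVert^3).
\]
Now put $w$ on coordinates with $a_i\to-2$. This is realizable: take $c_{t-1,i}=-1$ and $c_{t,i}\approx1$ on a set of small $\ARvec{p}_t$-mass, and the opposite signs elsewhere. The variance term then tends to $2\eta^2V_w$, which equals the drift exactly at $\eta=1/8$. So the statement is second-order sharp at its own threshold, and any expansion with slack fails there:
\begin{itemize}
\item Your bound $e^z\le1+z+z^2$ doubles the quadratic coefficient, so it needs at least $\eta\le1/16$.
\item The Hoeffding/Bernstein variant cannot be run as described, because the rest of $z_i$ is not ``deterministic''. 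It contains $\tfrac{\eta}{2}a_iw_i$, which merges with $-\eta w_i$ into exactly the tight term $-\eta(1-a_i/2)w_i$.
\end{itemize}
At best your route gives monotonicity for a smaller learning rate, not for $\eta\le1/8$ as stated and as used with $\eta=1/8$ in the algorithm.

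\textbf{The missing idea is to replace expansion by concavity.} Treat the log-ratio as a function $H_t(\ARvec{z})$ of the centered new loss $\ARvec{z}=\ARvec{c}_t-\theta_{t+1}\one\in[-2,2]^d$. Its Hessian is
\[
\operatorname{Diag}[q_i(v_i^2-2\beta)]-(\ARvec{q}\odot\ARvec{v})(\ARvec{q}\odot\ARvec{v})^\top,\qquad v_i=-\eta-2\beta z_i\in[-2\eta,0].
\]
This is negative semidefinite on the whole box exactly when $4\eta^2\le2\beta=\eta/2$, i.e.\ $\eta\le1/8$, the same threshold found above. The point $\ARvec{z}=-\ARvec{\xi}_t$, which is your $u=-a$, $\delta=0$, has $H_t=0$. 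It is also stationary: $\ARvec{x}_t=\ARvec{p}_t\odot(\one-\ARvec{\xi}_t/2)$ and $\beta=\eta/4$ give $\nabla H_t(-\ARvec{\xi}_t)=(2\beta-\eta/2)\ARvec{p}_t\odot\ARvec{\xi}_t=\ARzero$. Global concavity then yields $H_t(\ARvec{\xi}_{t+1})\le0$ exactly, with no loss in constants.
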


\begin{proof}
Let us define $\ARvec{\xi}_t=\ARvec{c}_{t-1}-\theta_t\one$ and $\ARvec{\xi}_{t+1}=\ARvec{c}_t-\theta_{t+1}\one$.
Applying \cref{lem:center-feasible} at rounds $t$ and $t+1$ gives $\ARvec{\xi}_t,\ARvec{\xi}_{t+1}\in[-2,2]^d$, $\ARvec{p}_t^\top\ARvec{\xi}_t=0$, and the update gives $x_{t,i}=p_{t,i}(1-\xi_{t,i}/2)$. We have the following identity:
\[
R_{t,i} - R_{t-1,i} = (\ARvec{c}_{t-1}+\ARvec{c}_t)^\top \ARvec{x}_t-c_{t-1,i}-c_{t,i}
=(\ARvec{\xi}_t+\ARvec{\xi}_{t+1})^\top \ARvec{x}_t
-\xi_{t,i}-\xi_{t+1,i}.
\]
By definition of $\Phi_t$ and $p_{t,i} \propto \exp(\eta R_{t-1,i} - \beta(c_{t-1,i}-\theta_t)^2)$, we have
\begin{align*}
    \Phi_{t+1} - \Phi_t &= \log\left(\frac{\sum_{i=1}^d \exp \left(\eta R_{t,i} - \beta(c_{t,i} -\theta_{t+1})^2 \right)}{\sum_{i=1}^d \exp \left(\eta R_{t-1,i} - \beta(c_{t-1,i} -\theta_{t})^2 \right)} \right) \\
    &=  \log \left( \sum_{i=1}^d p_{t,i}
\exp\!\left(
\eta (R_{t,i} - R_{t-1,i})
+\beta(\xi_{t,i}^2-\xi_{t+1,i}^2)
\right) \right).  \\
     &= 
\log \left( \sum_{i=1}^d p_{t,i}
\exp\!\left(
\eta((\ARvec{\xi}_t+\ARvec{\xi}_{t+1})^\top \ARvec{x}_t
-\xi_{t,i}-\xi_{t+1,i})
+\beta(\xi_{t,i}^2-\xi_{t+1,i}^2)
\right) \right).
\end{align*}
We then use the following lemma to show that $\Phi_{t+1} \le \Phi_t$.

\begin{lemma}\label[lemma]{lem:one-step}
Assume $\eta \le \frac{1}{8}$ and $\beta = \eta/4$. Then for any $t \ge 1$, we have
\begin{equation*}
\log \left( \sum_{i=1}^d p_{t,i}
\exp\!\left(
\eta\bigl((\ARvec{\xi}_t+\ARvec{\xi}_{t+1})^\top \ARvec{x}_t
-\xi_{t,i}-\xi_{t+1,i}\bigr)
+\beta(\xi_{t,i}^2-\xi_{t+1,i}^2)
\right)  \right) 
\leq 0.
\end{equation*}
\end{lemma}

\begin{proof}
Let us define the function $H_t: [-2, 2]^d \rightarrow \-R$ as follows:
\[
H_t(\ARvec{z})
=
\log\sum_{i=1}^d p_{t,i}
\exp\!\left(
\eta\bigl((\ARvec{\xi}_t+\ARvec{z})^\top \ARvec{x}_t
-\xi_{t,i}-z_i\bigr)
+\beta(\xi_{t,i}^2-z_i^2)
\right).
\]
It suffices to show $H_t(\ARvec{\xi}_{t+1}) \le0$.

Define $\ARvec{q}(\ARvec{z})$ be the distribution obtained by normalizing the summands:
\[
q_j(\ARvec{z}) = \frac{p_{t,j}
\exp\!\left(
\eta\bigl((\ARvec{\xi}_t+\ARvec{z})^\top \ARvec{x}_t
-\xi_{t,j}-z_j\bigr)
+\beta(\xi_{t,j}^2-z_j^2) \right)}{\sum_{i=1}^d p_{t,i}
\exp\!\left(
\eta\bigl((\ARvec{\xi}_t+\ARvec{z})^\top \ARvec{x}_t
-\xi_{t,i}-z_i\bigr)
+\beta(\xi_{t,i}^2-z_i^2) \right)}.
\]
We can calculate the gradient $\nabla H_t(\ARvec{z})=\eta \ARvec{x}_t+\ARvec{q}(\ARvec{z})\odot(-\eta\one-2\beta \ARvec{z})$, where we use the point-wise product notation $(\ARvec{a} \odot \ARvec{b})_i = a_i b_i$. We then calculate the Hessian
\begin{align*}
\nabla^2H_t(\ARvec{z})
&=\operatorname{Diag}\!\left[
q_i(\ARvec{z})\left((-\eta-2\beta z_i)^2-2\beta\right)
\right]\notag\\
&\quad-
\bigl(\ARvec{q}(\ARvec{z})\odot(-\eta\one-2\beta \ARvec{z})\bigr)
\bigl(\ARvec{q}(\ARvec{z})\odot(-\eta\one-2\beta \ARvec{z})\bigr)^\top,
\end{align*}
where we use notation $\operatorname{Diag}[a_i]$ to denote the square $d\times d$ diagonal matrix whose $i$-th diagonal is $a_i$ for $i \in [d]$. For any $\ARvec{z}\in [-2,2]^d$, since $2\beta \ge \max\{ (\eta- 4\beta)^2, (\eta +4\beta)^2\}$, we have $\nabla^2 H_t(\ARvec{z})$ is negative semidefinite. Therefore $H_t$ is concave over $[-2,2]^d$.

Let us analyze $\ARvec{z} = -\ARvec{\xi}_t$. By definition, we have $H_t(-\ARvec{\xi}_t) = 0$ and $\ARvec{q}(-\ARvec{\xi_t}) = \ARvec{p}_t$. Then the gradient $\nabla H_t(-\ARvec{\xi}_t)$ satisfies
\begin{align*}
\nabla H_t(-\ARvec{\xi}_t) & = \eta \ARvec{x_t} + \ARvec{p}_t \odot (-\eta + 2\beta \ARvec{\xi}_t) \\
&= \eta \ARvec{p_t} \odot (1- \ARvec{\xi}_t/2) + \ARvec{p}_t \odot (-\eta + 2\beta \ARvec{\xi}_t) \\ 
&= (2\beta-\eta/2) \ARvec{p_t} \odot \ARvec{\xi}_t  \\
& = \ARvec{0}.
\end{align*}
Thus $-\ARvec{\xi}_t$ is the maximizer of $H_t$ over $[-2,2]^d$ since $H_t$ is concave. Then we have $H_t(\ARvec{\xi_{t+1}}) \le H_t(-\ARvec{\xi}_t) = 0$. This completes the proof.
\end{proof}
By \Cref{lem:one-step}, we conclude that $\Phi_{t+1}\le \Phi_t$ for every $t \ge 1$.
\end{proof}

\begin{proof}[Proof of \Cref{thm:main}]
The choices $\eta = 1/8$ and $\beta = 1/32$ in \Cref{alg:variance hedge} satisfy the conditions in \Cref{lem:center-feasible} and \Cref{thm:potential}. Fix any $T \ge 1$. By \Cref{thm:potential}, we have $\Phi_{T+1}\leq\Phi_1=\log d$. Fix any pure action $i \in [d]$. By setting $\theta = c_{T,i}$ in \cref{eq:potential},  we have $\Phi_{T+1} \ge \eta R_{T,i}$.  
Therefore $R_{T,i}\leq\log d/\eta=8\log d$ for every $i$. Thus the alternating regret is at most $8 \log d$ for any $T \ge 1$.
\end{proof}

\subsection{An $\Omega(\log d)$ lower bound}
We construct an adaptive adversary such that any algorithm suffers $\Omega(\log d)$ alternating regret. Together with \Cref{thm:main}, we show that the minimax-optimal alternating regret for OLO over the simplex is $\Theta(\log d)$. In this lower bound construction, the adversary adaptively chooses the loss function so that the alternating regret in each round $t \in [d-1]$ is at least $2/(d+1-t)$. This then implies the $\Omega(\log d)$ bound.  

\begin{theorem}[Logarithmic dependence is necessary]\label{thm:lower}
For every $d\geq2$, every horizon $T\geq d-1$, and every online algorithm, an adaptive adversary can choose linear costs in $[-1,1]^d$ such that
\[
\AltReg^T\geq2\left(\sum_{m=1}^d\frac1m-1\right )= 2\log d -O(1).
\]
\end{theorem}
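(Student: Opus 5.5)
We construct an adaptive adversary that maintains a shrinking set of ``active'' experts and eliminates, in each round, the active expert on which the learner currently puts the most mass. Initially $S_1=[d]$. In round $t\le d-1$, after seeing $\ARvec{x}_t$, the adversary picks $j_t\in\argmax_{i\in S_t} x_{t,i}$ and sets $S_{t+1}=S_t\setminus\{j_t\}$, so $|S_t|=d+1-t$. The loss $\ARvec{c}_t$ is $c_{t,i}=-1$ for $i\in S_{t+1}$, and $c_{t,i}=+1$ for $i\notin S_{t+1}$ (this covers $j_t$ and all previously eliminated experts). For rounds $t\ge d$ (there is then a single survivor $i^\star$), the adversary keeps playing $c_{t,i^\star}=-1$ and $c_{t,i}=+1$ for $i\ne i^\star$.

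The comparator will be the final survivor $i^\star$. The plan is to compute, round by round, the learner's effective loss $(\ARvec{c}_{t-1}+\ARvec{c}_t)^\top\ARvec{x}_t$ against the effective loss $c_{t-1,i}+c_{t,i}$ of each expert. This shows that $i^\star$ has the pointwise smallest effective loss in every round, so it is the minimizer in \cref{eq:regret}. The alternating regret is then the sum of per-round gaps $\sum_i x_{t,i}\bigl((c_{t-1,i}+c_{t,i})-(c_{t-1,i^\star}+c_{t,i^\star})\bigr)$.

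For $2\le t\le d-1$ the effective losses are:
\begin{itemize}
\item a surviving expert has $-1-1=-2$;
\item the just-eliminated $j_t$ has $-1+1=0$;
\item each previously eliminated expert has $1+1=2$.
\end{itemize}
Let $m$ be the learner's mass outside $S_t$. The gap is then at least $2x_{t,j_t}+4m\ge 2(1-m)/|S_t|+4m\ge 2/(d+1-t)$, using that the maximal active coordinate carries at least an average share of the active mass $1-m$.

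Round $1$ needs separate treatment since $\ARvec{c}_0=\ARzero$. There the effective losses are $-1$ for survivors and $+1$ for $j_1$, so the gap is $2x_{1,j_1}\ge 2/d$, matching the formula. For $t\ge d$ every non-survivor has effective loss $\ge 0$ while $i^\star$ has $-2$, so each gap is nonnegative. Summing the gaps over $t=1,\dots,d-1$ gives $\sum_{t=1}^{d-1}2/(d+1-t)=2\left(\sum_{m=1}^d 1/m-1\right)$, as claimed; this is where $T\ge d-1$ is used.

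The main thing to get right is the bookkeeping in the elimination round. The eliminated expert $j_t$ must pay \emph{exactly} $2$ more than survivors in that round (its $c_{t-1}$ is still $-1$), and it must never do better than $i^\star$ afterwards, so that $i^\star$ is indeed the best fixed action in hindsight. The cheap step is the averaging bound $x_{t,j_t}\ge(1-m)/|S_t|$ combined with $4m\ge 2m/|S_t|$, which absorbs any mass the learner wastes on eliminated experts.
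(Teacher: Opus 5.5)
Your proposal is correct and follows the paper's proof essentially verbatim. It uses the same adaptive elimination of the heaviest active expert with losses $-1$ on survivors and $+1$ elsewhere (your tail losses for $t\ge d$ coincide with the paper's choice $\ARvec{c}_t=\ARvec{c}_{d-1}$), the same averaging bound $2x_{t,j_t}+4m\ge 2/(d+1-t)$, and the same harmonic sum. The only differences are cosmetic: you index the active set as $S_t$ rather than $S_{t-1}$, and you additionally observe that $i^\star$ has the pointwise smallest effective loss in every round, which is true but not needed, since evaluating the regret at $i^\star$ already gives the lower bound.
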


\begin{proof}
Define $S_0=[d]$.  
In round $1\le t\le d-1$, after observing $\ARvec{x}_t$, the adversary computes $j_t\in\argmax_{j\in S_{t-1}}x_{t,j}$,  update $S_t:=S_{t-1}\setminus\{j_t\}$, and sets the loss $\ARvec{c}_t$
\begin{equation*}
c_{t,i}
=
\begin{cases}
-1, & i\in S_t,\\
+1, & i\notin S_t.
\end{cases}
\end{equation*}
The set $S_{d-1}$ contains one action, denoted by $i^\star$. For $t\ge d$, the adversary always chooses $\ARvec{c}_t = \ARvec{c}_{d-1}$.

At $t=1$, we have $\ARvec{c}_1^\top \ARvec{x}_1-c_{1,i^\star}=2x_{1,j_1}\geq2/d$.  For $2\le  t\le d-1$, the effective cost $\ARvec{g}_t:=\ARvec{c}_{t-1}+\ARvec{c}_t$ is
\begin{equation*}
g_{t,i}
=
\begin{cases}
-2, & i\in S_t,\\
0, & i=j_t,\\
+2, & i\notin S_{t-1}.
\end{cases}
\end{equation*}
Thus the learner's one-round alternating regret relative to $i^\star$ is 
\begin{align*}
    \ARvec{g}_t^\top \ARvec{x}_t-g_{t,i^\star}
=2x_{t,j_t}+4\sum_{i\notin S_{t-1}}x_{t,i} \ge 2(1-\sum_{i\notin S_{t-1}}x_{t.i})/(d+1-t) + 4\sum_{i\notin S_{t-1}}x_{t,i}  \ge \frac{2}{d+1-t},
\end{align*}
where the first inequality is because $x_{t,j_t} = \max_{j \in S_{t-1}}x_{t,j} \ge (1-\sum_{i\notin S_{t-1}}x_{t,i})/|S_{t-1}| = (1-\sum_{i\notin S_{t-1}}x_{t,i})/(d+1-t)$. Therefore, we have
\begin{align*}
    \AltReg^{d-1}\ge \sum_{t=1}^{d-1} (\ARvec{g}_t^\top \ARvec{x}_t-g_{t,i^\star}) \ge \sum_{t=1}^{d-1} \frac{2}{d+1-t} = 2\left(\sum_{m=1}^d\frac1m-1\right )= 2\log d -O(1).
\end{align*}
Moreover, since $\ARvec{c}_t = \ARvec{c}_{d-1}$ for all $t \ge d$, we have $\AltReg^{t} \ge \AltReg^{d-1} \ge 2\log d - O(1)$. 
\end{proof}

\section{Alternating regret for online convex optimization}\label{sec:oco}
In this section, we extend our result to the more general setting of online convex optimization.

\paragraph{Online convex optimization.} We first introduce some notations. Let $\X\subseteq\R^d$ be a nonempty compact convex set. We denote by $\Delta_{\+X}$ the set of Borel probability measures over $\+X$. For $\rho,\mu\in\Delta_{\+X}$, we say that $\mu$ \emph{dominates} $\rho$ if $\rho$ is absolutely continuous with respect to $\mu$, i.e., $\rho(A)=0$ for every measurable set $A$ such that $\mu(A)=0$. We write $\rho\ll\mu$ if $\mu$ dominates $\rho$.

At round $t\ge1$, the learner chooses $\ARvec{x}_t\in\+X$ using $f_1,\ldots,f_{t-1}$, and then the adversary adaptively chooses a continuous convex loss function $f_t:\+X\to[-1,1]$. The alternating regret to $\ARvec{u}\in\+X$ is
\begin{equation}\label{eq:alt-regret}
  \AltReg^T(\ARvec{u})
  :=\sum_{t=1}^{T}\bigl(f_{t-1}(\ARvec{x}_t)+f_t(\ARvec{x}_t)\bigr)
  -\sum_{t=1}^{T}\bigl(f_{t-1}(\ARvec{u})+f_t(\ARvec{u})\bigr)
  =\sum_{t=1}^{T}\bigl(g_t(\ARvec{x}_t)-g_t(\ARvec{u})\bigr),
\end{equation}
where we define $f_0\equiv0$ and the effective loss function $g_t:=f_{t-1}+f_t$ for $t\ge1$. For a distribution $\rho\in\Delta_{\+X}$, we write $\AltReg^T(\rho):=\E_{\ARvec{u}\sim\rho}[\AltReg^T(\ARvec{u})]$. We also use the shorthand $\-E_{\mu}[f]:= \-E_{x\sim \mu}[f(x)]$.

\subsection{The continuous  alternation-aware Hedge algorithm}
We propose the continuous  alternation-aware Hedge algorithm (\Cref{alg:main}), a continuous generalization of \Cref{alg:variance hedge}. Fix a reference distribution $\mu\in\Delta_{\+X}$, and define $L_0\equiv0$ and $L_t(\ARvec{u}):=\sum_{s=1}^t g_s(\ARvec{u})$. The algorithm replaces the finite weight vector $\ARvec{p}_t$ by a probability measure $P_t$, corrects it to a probability measure $Q_t$, and plays the barycenter of $Q_t$. When $\+X=\Delta_d$, $\mu$ is uniform over the pure actions, and the losses are linear, the algorithm reduces to \Cref{alg:variance hedge}.

As in \Cref{alg:variance hedge}, we first show that the algorithm is well defined and feasible. The proof is deferred to \cref{sec:oco-proofs}.
\begin{lemma}\label{lem:continuous-center}
Assume $\beta<1/2$. In \Cref{alg:main}, $\theta_t$ is unique and satisfies $\theta_t=\E_{P_t}[f_{t-1}]$. Moreover, $Q_t$ is a probability measure and $\ARvec{x}_t\in\+X$.
\end{lemma}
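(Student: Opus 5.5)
We mirror the proof of \Cref{lem:center-feasible} with the finite sum replaced by an integral against $\mu$. Presumably \Cref{alg:main} defines
\[
\theta_t=\argmax_\theta F(\theta), \qquad F(\theta)=\log\int_{\+X}\exp\!\left[-\eta L_{t-1}(\ARvec{u})-\beta(f_{t-1}(\ARvec{u})-\theta)^2\right]\dd\mu(\ARvec{u}).
\]
It then sets $\dd P_t\propto \exp[-\eta L_{t-1}-\beta(f_{t-1}-\theta_t)^2]\dd\mu$ and $\dd Q_t=(1-(f_{t-1}-\theta_t)/2)\dd P_t$, and plays $\ARvec{x}_t=\E_{Q_t}[\ARvec{u}]$.

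\textbf{Step 1: $F$ is well defined.} Since $f_s$ takes values in $[-1,1]$, the function $L_{t-1}$ is bounded on $\+X$ (by $2(t-1)$) and measurable, being a sum of continuous functions. Hence the integrand is bounded, strictly positive and measurable, so $F(\theta)$ is finite for every $\theta$. The same bounds justify differentiating under the integral sign by dominated convergence, since the $\theta$-derivatives of the integrand are bounded uniformly for $\theta$ in compact sets. This gives
\[
F'(\theta)=2\beta\bigl(\E_{P(\theta)}[f_{t-1}]-\theta\bigr),\qquad F''(\theta)=4\beta^2\operatorname{Var}_{P(\theta)}(f_{t-1})-2\beta,
\]
where $P(\theta)$ is the normalized tilted measure.

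\textbf{Step 2: uniqueness and the identity for $\theta_t$.} Since $f_{t-1}$ takes values in $[-1,1]$, its variance is at most $1$, so $F''\le 4\beta^2-2\beta<0$ when $\beta<1/2$. Thus $F$ is strictly concave. Moreover $F(\theta)\le \eta\cdot 2(t-1)-\beta(|\theta|-1)^2\to-\infty$ as $|\theta|\to\infty$, so a unique maximizer $\theta_t$ exists. The condition $F'(\theta_t)=0$ gives $\theta_t=\E_{P_t}[f_{t-1}]\in[-1,1]$.

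\textbf{Step 3: feasibility.} The density $1-(f_{t-1}-\theta_t)/2$ is nonnegative because $|f_{t-1}-\theta_t|\le 2$. It integrates to $1$ under $P_t$ by the identity $\theta_t=\E_{P_t}[f_{t-1}]$, so $Q_t$ is a probability measure on $\+X$. Finally, $\ARvec{x}_t=\E_{Q_t}[\ARvec{u}]$ lies in $\+X$: the set $\+X$ is compact, so the mean exists. If the mean lay outside the closed convex set $\+X$, a separating hyperplane $\ARvec{a}^\top\ARvec{x}_t> \sup_{\+X}\ARvec{a}^\top\ARvec{u}$ would contradict $\ARvec{a}^\top\ARvec{x}_t=\E_{Q_t}[\ARvec{a}^\top\ARvec{u}]$.

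\textbf{Main obstacle.} The only step beyond the discrete case is the measure-theoretic justification in Step 1 (measurability, and interchanging differentiation with integration) together with the barycenter argument in Step 3. Both follow routinely from the boundedness of the losses and the compactness of $\+X$.
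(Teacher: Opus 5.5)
Your proposal is correct and follows the paper's proof essentially step for step. Both arguments use strict concavity of $F_t$ via $F_t''=4\beta^2\operatorname{Var}_{P_{t,\theta}}(f_{t-1})-2\beta<0$, then $F_t\to-\infty$, then the first-order condition $\theta_t=\E_{P_t}[f_{t-1}]$ to get a nonnegative density of unit mass for $Q_t$, and finally the barycenter lying in the compact convex set $\+X$. The only difference is that you spell out details the paper merely asserts: dominated convergence, the coercivity bound (which, as you use it, only needs to hold for $|\theta|\ge 1$), and a separating-hyperplane argument for the barycenter.
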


\begin{algorithm}[H]
\caption{Continuous alternation-aware Hedge (continuous AA-Hedge)}
\label{alg:main}
\begin{algorithmic}[1]
\State Given action set $\+X$ and the reference distribution $\mu \in \Delta_{\+X}$
\State Set $f_0\equiv0$, $L_0\equiv0$, $\eta=1/8$, and $\beta=1/32$.
\For{$t=1,2,\ldots$}
  \State Compute $\theta_t$ as the unique maximizer of
  \[
  \log\int_{\+X}\exp\!\left[-\eta L_{t-1}(\ARvec{u})-\beta(f_{t-1}(\ARvec{u})-\theta)^2\right]\mu(\dd\ARvec{u}).
  \]
  \State Set $\xi_t(\ARvec{u})=f_{t-1}(\ARvec{u})-\theta_t$ and form the probability measure
  \[
  P_t(\dd\ARvec{u})\propto
  \exp\!\left[-\eta L_{t-1}(\ARvec{u})-\beta\xi_t(\ARvec{u})^2\right]\mu(\dd\ARvec{u}).
  \]
  \State Set $Q_t(\dd\ARvec{u})=\left(1-\xi_t(\ARvec{u})/2\right)P_t(\dd\ARvec{u})$.
  \State Play the barycenter $\ARvec{x}_t=\int_{\X}\ARvec{u}\,Q_t(\dd \ARvec{u})$.
  \State Observe $f_t$ and update $L_t\gets L_{t-1}+f_{t-1}+f_t$.
\EndFor
\end{algorithmic}
\end{algorithm}

\subsection{Alternating regret bounds}
We first prove an upper bound on the distributional alternating regret $\AltReg^T(\rho)$ for any $\rho \ll \mu$. The alternating regret is at most $O(\KL(\rho \| \mu))$ where the KL divergence over measures is defined as $\KL(\rho \| \mu):= \int_{\+X} \log \frac{\dd \rho}{\dd \mu} \dd \rho$. This is the continuous analogue of comparing with a pure action in \Cref{thm:main}. The proof is similar to \Cref{thm:main} and is deferred to \cref{sec:oco-proofs}. 

\begin{theorem}[Distributional alternating-regret bound]\label{thm:distributional}
For every horizon $T\ge1$ and every $\rho\ll\mu$, \Cref{alg:main} guarantees
\begin{equation}\label{eq:distributional-bound}
  \AltReg^T(\rho) \le  8\,\KL(\rho\|\mu) + \frac{\operatorname{Var}_{\rho}(f_T)}{4} \le 8\,\KL(\rho\|\mu) + \frac{1}{4} .
\end{equation}
\end{theorem}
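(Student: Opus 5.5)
The plan is to run the potential argument of \Cref{thm:potential} with integrals against $\mu$ replacing sums over actions, and then to extract the distributional bound by a Donsker--Varadhan (Gibbs variational) step in place of evaluating at a single pure action.

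\textbf{Potential.} Define the regret function
\[
R_t(\ARvec{u}) := \sum_{s\le t}\bigl(g_s(\ARvec{x}_s)-g_s(\ARvec{u})\bigr)
\]
and the potential
\[
\Phi_t := \max_{\theta}\log\int_{\X}\exp\!\left[\eta R_{t-1}(\ARvec{u})-\beta(f_{t-1}(\ARvec{u})-\theta)^2\right]\mu(\dd\ARvec{u}).
\]
The term $\eta\sum_{s<t} g_s(\ARvec{x}_s)$ is constant in $\ARvec{u}$, so the maximizer is $\theta_t$ from \Cref{alg:main}. Moreover $\Phi_1=\log\int e^{0}\,\dd\mu=0$, since $f_0\equiv 0$ and hence $\theta_1=0$.

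\textbf{Monotonicity.} Set $\xi_{t+1}=f_t-\theta_{t+1}$. The key step is to show $\Phi_{t+1}\le\Phi_t$. As in the finite case,
\[
\Phi_{t+1}-\Phi_t=\log\E_{\ARvec{u}\sim P_t}\exp\Bigl(\eta\bigl(g_t(\ARvec{x}_t)-f_{t-1}(\ARvec{u})-f_t(\ARvec{u})\bigr)+\beta\bigl(\xi_t(\ARvec{u})^2-\xi_{t+1}(\ARvec{u})^2\bigr)\Bigr).
\]
Here is the obstacle: $g_t(\ARvec{x}_t)$ is the loss at the barycenter, not a linear functional of the vector $\xi$. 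Convexity of $f_{t-1}$ and $f_t$, together with Jensen's inequality under $Q_t$, gives
\[
g_t(\ARvec{x}_t)\le\E_{Q_t}[f_{t-1}+f_t]=\theta_t+\E_{Q_t}[\xi_t]+\theta_{t+1}+\E_{Q_t}[\xi_{t+1}].
\]
The right-hand side is the analogue of $(\ARvec{\xi}_t+\ARvec{\xi}_{t+1})^\top\ARvec{x}_t$ with $\ARvec{x}_t$ replaced by the density $(1-\xi_t/2)\,\dd P_t$. The $\theta$ terms cancel against $f_{t-1}(\ARvec{u})+f_t(\ARvec{u})=\xi_t(\ARvec{u})+\xi_{t+1}(\ARvec{u})+\theta_t+\theta_{t+1}$.

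It therefore suffices to prove the functional analogue of \Cref{lem:one-step}: $H(z)\le 0$ at $z=\xi_{t+1}$, where
\[
H(z)=\log\E_{P_t}\exp\Bigl(\eta\bigl(\E_{Q_t}[\xi_t+z]-\xi_t-z\bigr)+\beta(\xi_t^2-z^2)\Bigr)
\]
for measurable $z:\X\to[-2,2]$. The same second-variation computation shows $H$ is concave along every segment $z+s\,h$. The Hessian form becomes
\[
\int q\,\bigl((\eta+2\beta z)^2-2\beta\bigr)h^2-\Bigl(\int q\,(\eta+2\beta z)\,h\Bigr)^2\le 0
\]
under the same parameter conditions. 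The first variation at $z=-\xi_t$ is $(2\beta-\eta/2)\,\xi_t\,P_t=0$, and $H(-\xi_t)=0$. Concavity along the segment from $-\xi_t$ to $\xi_{t+1}$ then gives $H(\xi_{t+1})\le 0$. Lemma~\ref{lem:continuous-center} supplies $\theta_t=\E_{P_t}[f_{t-1}]$ and the range $\xi\in[-2,2]$.

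\textbf{Extracting the bound.} Monotonicity gives $\Phi_{T+1}\le 0$. I would take $\theta=\E_\rho[f_T]$ inside $\Phi_{T+1}$ and apply Donsker--Varadhan, $\log\E_\mu e^{F}\ge\E_\rho[F]-\KL(\rho\|\mu)$, which yields
\[
0\ge\eta\,\E_\rho[R_T]-\beta\,\E_\rho\bigl[(f_T-\E_\rho f_T)^2\bigr]-\KL(\rho\|\mu).
\]
With $\eta=1/8$ and $\beta=1/32$ this rearranges to $\AltReg^T(\rho)\le 8\KL(\rho\|\mu)+\operatorname{Var}_\rho(f_T)/4$. Finally, $f_T\in[-1,1]$ gives $\operatorname{Var}_\rho(f_T)\le 1$.

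The main care points are the Jensen step and justifying the infinite-dimensional concavity, which requires only one-dimensional restrictions, so no difficulty beyond the finite case is expected.
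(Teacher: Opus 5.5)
Your proposal is correct and matches the paper's proof essentially step for step. It uses the same potential (with $\Phi_1=0$), Jensen's inequality under $Q_t$ to handle the barycenter loss, the same functional one-step inequality proved by concavity along segments with vanishing first variation at $-\xi_t$, and the same extraction with $\theta=\E_\rho[f_T]$; the only cosmetic difference is that you cite Donsker--Varadhan, which the paper derives inline by restricting to $\{\dd\rho/\dd\mu>0\}$, changing measure to $\rho$, and applying Jensen.
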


Then we instantiate the results with $\mu$ being the uniform probability measure over $\+X$\footnote{This requires $\+X \subseteq \-R^d$ to be full-dimensional. If $\+X$ is not full-dimensional, let $\mu$ be the uniform probability measure over its affine hull and define $d$ as the dimension of its affine.},  to get alternating regret bound $\max_{\ARvec{u}\in \+X} \AltReg^T(\ARvec{u}) = O(d \log T)$.

\begin{theorem}[$O(d\log (1+T/d))$ alternating regret for OCO]\label{cor:pointwise}
For every horizon $T \ge 1$, \Cref{alg:main} with the uniform probability measure $\mu \in \Delta_{\+X}$ guarantees
\begin{equation}\label{eq:pointwise-bound}
  \max_{\ARvec{u} \in \+X}\AltReg^T(\ARvec{u})
  \le
  8d\left(1+\logp\frac{T}{2d}\right) + \frac{1}{4},
\end{equation}
where $\logp z:=\max\{0,\log z\}$. Since $\max_{\ARvec{u} \in \+X}\AltReg^T(\ARvec{u}) \le 4T$ trivially, we have
\begin{align*}
    \max_{\ARvec{u} \in \+X}\AltReg^T(\ARvec{u}) = O\left( \min\left\{T, d\left (1+\log_{+} \frac{T}{2d}\right) \right\} \right) = O(d \log(1+T/d))
\end{align*}
\end{theorem}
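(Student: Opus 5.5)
We derive the pointwise bound from \Cref{thm:distributional} by choosing, for each comparator $\ARvec{u}\in\+X$, a distribution $\rho$ concentrated near $\ARvec{u}$. Then we show that the distributional regret $\AltReg^T(\rho)$ is close to the pointwise regret $\AltReg^T(\ARvec{u})$. Fix $\ARvec{u}\in\+X$ and a shrinkage parameter $\lambda\in(0,1]$. Let $\+X_\lambda:=(1-\lambda)\ARvec{u}+\lambda\+X\subseteq\+X$, which is a subset of $\+X$ by convexity, and let $\rho$ be the uniform probability measure on $\+X_\lambda$. Then $\rho\ll\mu$ and $\dd\rho/\dd\mu=\lambda^{-d}$ on $\+X_\lambda$, since $\mathrm{vol}(\+X_\lambda)=\lambda^d\mathrm{vol}(\+X)$. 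Hence
\[
\KL(\rho\|\mu)=d\log(1/\lambda).
\]

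Next we compare $\AltReg^T(\ARvec{u})$ with $\AltReg^T(\rho)$, which amounts to bounding $\sum_{t}\E_{\ARvec{v}\sim\rho}[g_t(\ARvec{v})]-\sum_t g_t(\ARvec{u})$ from below. Every $\ARvec{v}\in\+X_\lambda$ can be written as $\ARvec{v}=(1-\lambda)\ARvec{u}+\lambda\ARvec{w}$ with $\ARvec{w}\in\+X$. Convexity of $f_t$ and the bound $|f_t|\le1$ give
\[
f_t(\ARvec{v})\le(1-\lambda)f_t(\ARvec{u})+\lambda f_t(\ARvec{w})\le f_t(\ARvec{u})+2\lambda .
\]
This inequality points the wrong way for what we need. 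The regret against $\ARvec{u}$ equals the regret against $\rho$ plus $\sum_t(\E_\rho g_t-g_t(\ARvec{u}))$, and we need this extra sum bounded above. The convexity bound supplies exactly that upper bound: $\E_\rho[g_t]\le g_t(\ARvec{u})+4\lambda$, since $g_t=f_{t-1}+f_t$ is a sum of two functions each bounded by $2\lambda$ in this way. Summing over $t$, $\AltReg^T(\ARvec{u})\le\AltReg^T(\rho)+4\lambda T$. The sum over $f_0\equiv0$ only helps, and one could refine this by observing that each $f_t$ appears in two consecutive $g$'s, but the constant is what matters.

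Combining with \Cref{thm:distributional} gives
\[
\AltReg^T(\ARvec{u})\le 8d\log(1/\lambda)+4\lambda T+\tfrac14 .
\]
We optimize over $\lambda\in(0,1]$. Taking $\lambda=\min\{1,2d/T\}$ yields $4\lambda T\le 8d$ and $8d\log(1/\lambda)=8d\logp(T/(2d))$, which is exactly \eqref{eq:pointwise-bound}. The trivial bound $4T$ holds because $|g_t|\le2$. Combining the two bounds gives the $O(d\log(1+T/d))$ statement by elementary comparisons: when $T\le 2d$ the right-hand side is $O(d)$, and $\min\{T,d\}\le O(d\log(1+T/d))$.

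The main point needing care is the measure-theoretic setup when $\+X$ is not full-dimensional. In that case we work in the affine hull with Lebesgue measure there, and $d$ denotes the affine dimension; the scaling identity $\mathrm{vol}(\+X_\lambda)=\lambda^d\mathrm{vol}(\+X)$ still holds. Apart from that, the only nontrivial ingredient is choosing the sign direction in the convexity step correctly, and everything else is routine.
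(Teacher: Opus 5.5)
Your proof is correct and follows the paper's argument: shrink $\+X$ toward $\ARvec{u}$ by a factor $\lambda$, use $\KL(\rho\|\mu)=d\log(1/\lambda)$, bound the comparator shift by $4\lambda T$ via convexity, and take $\lambda=\min\{1,2d/T\}$. The only flaw is in the exposition: your remarks that the shift must be bounded ``from below'' and that the convexity inequality ``points the wrong way'' are mistaken and should be deleted, since (as you then correctly say) you need, and obtain, an upper bound on $\sum_t(\E_\rho[g_t]-g_t(\ARvec{u}))$.
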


\begin{proof}
Fix $\ARvec{u} \in \+X$ and $\delta \in (0,1)$. Let $\rho_{\ARvec{u},\delta}$ be the uniform probability measure on the set $(1-\delta)\ARvec{u}+\delta\X$. The affine map $\ARvec{z}\mapsto(1-\delta)\ARvec{u}+\delta\ARvec{z}$ scales $d$-dimensional volume by $\delta^d$, so $\KL(\rho_{\ARvec{u},\delta}\|\mu)=d\log(1/\delta)$. For any $\ARvec{v}=(1-\delta)\ARvec{u}+\delta\ARvec{z}$ with $\ARvec{z} \in \+X$, the convexity of $f_t\in[-1,1]$ gives
\[
f_t(\ARvec{v})\le(1-\delta)f_t(\ARvec{u})+\delta f_t(\ARvec{z})\le f_t(\ARvec{u})+2\delta.
\]
Thus, we have $\E_{\rho_{\ARvec{u},\delta}}[f_t]-f_t(\ARvec{u}) \le2\delta$ for any $t \ge 1$. 

Combining the above with \Cref{thm:distributional}, we have
\begin{align*}
    \AltReg^T(\ARvec{u}) &=\AltReg^T(\rho_{\ARvec{u}, \delta}) +2\sum_{t=1}^{T-1}\left(\E_{\rho_{\ARvec{u}, \delta}}[f_t]-f_t(\ARvec{u})\right) +\left(\E_{\rho_{\ARvec{u}, \delta}}[f_T]-f_T(\ARvec{u})\right).\\
    & \le 8\KL(\rho_{\ARvec{u}, \delta}\| \mu) + \frac{1}{4} + 4T\delta \\
    &\le 8d\log(1/\delta) + \frac{1}{4} + 4T \delta.
\end{align*}
Choosing $\delta=\min\{1,2d/T\}$ proves the claim.
\end{proof}

Moreover, for OLO over a polytope $\+X$ with $N$ vertices,  \Cref{alg:main} with $\mu$ being the uniform distribution over these vertices has alternating regret $O(\log N)$. 

\begin{corollary}[Constant alternating regret for polyhedral OLO]\label[corollary]{cor:polytope}
Let $\X=\operatorname{conv}\{\ARvec{v}_1,\ldots,\ARvec{v}_N\}$ be a  polytope. Consider  linear losses $f_t(\ARvec{x})=\langle \ARvec{c}_t,\ARvec{x}\rangle$, with $f_t(\ARvec{x})\in[-1,1]$ for every $\ARvec{x}\in\+X$. Running \Cref{alg:main} with $\mu$ being the uniform distribution on $\{\ARvec{v}_1,\ldots,\ARvec{v}_N\}$ gives
\[
  \max_{\ARvec{u}\in\X}\AltReg^T(\ARvec{u})\le8\log N.
\]
In particular, for $\X=\Delta_d$ and $\ARvec{c}_t\in[-1,1]^d$, \Cref{alg:main} coincides with \Cref{alg:variance hedge} and recovers the bound $8\log d$.
\end{corollary}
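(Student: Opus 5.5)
The plan is to apply \Cref{thm:distributional} with the comparator $\rho$ taken to be a point mass at a vertex, and then use linearity to pass from vertices to all of $\X$. Throughout, $\mu$ is the uniform distribution on $\{\ARvec{v}_1,\ldots,\ARvec{v}_N\}$, and we may assume the $\ARvec{v}_j$ are distinct. By \Cref{lem:continuous-center}, the iterates $\ARvec{x}_t$ are well defined and lie in $\X$. Concretely, $\ARvec{x}_t$ is a convex combination of the vertices with weights $Q_t(\{\ARvec{v}_j\})$.

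First, fix a vertex $\ARvec{v}_j$ and let $\rho=\delta_{\ARvec{v}_j}$. Since $\mu(\{\ARvec{v}_j\})=1/N>0$, we have $\rho\ll\mu$. The Radon--Nikodym derivative is $\dd\rho/\dd\mu = N$ on $\{\ARvec{v}_j\}$, so $\KL(\rho\|\mu)=\log N$. A point mass has zero variance, so $\operatorname{Var}_\rho(f_T)=0$. This is why we use the first, sharper inequality in \eqref{eq:distributional-bound} rather than the version with the additive $1/4$. Since $\AltReg^T(\rho)=\AltReg^T(\ARvec{v}_j)$, we get $\AltReg^T(\ARvec{v}_j)\le 8\log N$ for every $j$.

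Second, extend from vertices to all of $\X$. With linear losses, $\AltReg^T(\ARvec{u})=\sum_t g_t(\ARvec{x}_t)-\langle \sum_{t}(\ARvec{c}_{t-1}+\ARvec{c}_t),\ARvec{u}\rangle$. This is an affine function of $\ARvec{u}$, since the first sum does not depend on $\ARvec{u}$. Its maximum over the polytope $\X=\operatorname{conv}\{\ARvec{v}_j\}$ is therefore attained at a vertex. Equivalently, write $\ARvec{u}=\sum_j\lambda_j\ARvec{v}_j$; then $\AltReg^T(\ARvec{u})=\sum_j\lambda_j\AltReg^T(\ARvec{v}_j)\le 8\log N$.

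Finally, check the reduction to \Cref{alg:variance hedge} for $\X=\Delta_d$, where the vertices are $\ARvec{e}_1,\ldots,\ARvec{e}_d$ and $N=d$. Here $f_t(\ARvec{e}_i)=c_{t,i}$ and $L_{t-1}(\ARvec{e}_i)=L_{t-1,i}$. The $\mu$-integral becomes $\frac1d\sum_i$, and the constant factor $1/d$ does not change the maximizer, so $\theta_t$ agrees with the one in \Cref{alg:variance hedge}. It follows that $P_t(\{\ARvec{e}_i\})=p_{t,i}$ and $Q_t(\{\ARvec{e}_i\})=p_{t,i}(1-(c_{t-1,i}-\theta_t)/2)$. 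The barycenter is then exactly the $\ARvec{x}_t$ of \Cref{alg:variance hedge}, and the bound $8\log d$ follows.

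No step is a real obstacle. The one point needing care is noticing that the variance term vanishes for Dirac comparators, which lets us drop the $1/4$.
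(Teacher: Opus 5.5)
Your proposal is correct and follows the same route as the paper. You apply \Cref{thm:distributional} with $\rho=\delta_{\ARvec{v}_j}$, using $\KL(\delta_{\ARvec{v}_j}\|\mu)=\log N$ and zero variance, and then reduce the maximum over $\X$ to the vertices by linearity; your explicit check that \Cref{alg:main} reduces to \Cref{alg:variance hedge} on $\Delta_d$ spells out a detail the paper leaves implicit.
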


\begin{proof}
The cumulative comparator loss is linear, so it attains its minimum over the polytope at one of its vertices. Applying \Cref{thm:distributional} with $\rho = \delta_{\ARvec{v}_j}$ and note that $\KL(\delta_{\ARvec{v}_j}\|\mu)=\log N$ and $\operatorname{Var}_{\delta_{\ARvec{v}_j}}(f_T)=0$. Thus we have $\max_{\ARvec{u}\in\X}\AltReg^T(\ARvec{u}) = \max_{j \in [N]} \AltReg^T(\ARvec{v}_j) \le 8\log N$.
\end{proof}

\section{Matching lower bounds for OCO}
In this section, we prove a alternating regret lower bound of $\Omega(d\log(1+T/d))$ for OCO with bounded convex losses over a $d$-dimensional convex set $\+X \subseteq B_d(1):=\{\ARvec{x} \in \-R^d: \|\ARvec{x}\|_2 \le 1\}$. In \Cref{sec:oco-lower},  we first prove a $\Omega(\log T)$ lower bound when the action set is the $2$-dimensional unit ball $B_2(1)$ (\Cref{thm:oco-lower}). Then, in \Cref{sec:oco-lower-dim}, we extend the $2$-dimensional construction to $d$-dimensional convex sets and prove the $\Omega(d\log(1+T/d))$ lower bound.

\subsection{An $\Omega(\log T)$ lower bound for OCO over $2$-dimensional ball}\label{sec:oco-lower}

In  this section, we show that the $\log T$ dependence in \cref{cor:pointwise} is unavoidable. The lower bound already holds on the two-dimensional Euclidean unit ball $
  B_2(1):=\{\ARvec{x}\in\R^2:\lVert\ARvec{x}\rVert_2\leq1\}$.
Throughout this section, let $\AltReg^T:=\max_{\ARvec{u}\in B_2(1)}\AltReg^T(\ARvec{u})$.

\begin{theorem}[Logarithmic lower bound for OCO]\label{thm:oco-lower}
For every integer $T\geq1$ and every possibly randomized online algorithm $\mathcal A$, there is an sequence of continuous convex losses $f_1,\ldots,f_T:B_2(1)\to[0,1]$ such that
\[
  \E_{\mathcal A}[\AltReg^T]
  \geq\frac{1}{125}\left\lfloor\log_{16}T\right\rfloor.
\]
The expectation is only over the randomness of $\mathcal A$.
\end{theorem}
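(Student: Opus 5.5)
The plan is to apply Yao's principle. I will construct a \emph{random} oblivious sequence of convex losses $f_1,\dots,f_T:B_2(1)\to[0,1]$ and show that every deterministic algorithm has expected alternating regret at least $\frac{1}{125}\lfloor\log_{16}T\rfloor$. This yields a fixed sequence for each randomized $\mathcal A$, since $\max_{\text{seq}}\E_{\mathcal A}\ge\E_{\text{seq}}\E_{\mathcal A}$. Write $K=\lfloor\log_{16}T\rfloor$. I split the horizon into $K$ consecutive phases, phase $k$ having length $n_k\approx 16^{k}$, possibly padded with zero losses. The aim is that each phase contributes at least a constant $\frac1{125}$ to the expected regret against one common comparator. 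The factor $16$ leaves room for geometric sums: the interference of phase $k$ with earlier phases should be at most a small constant fraction of $1$, so those contributions do not cancel.

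\textbf{Construction within a phase.} The losses must defeat the advantage given by the alternating structure. Since $g_t=f_{t-1}+f_t$ and $f_{t-1}$ is revealed before $\ARvec{x}_t$ is chosen, the randomness in $f_t$ must be unpredictable from the past, and it must be strong enough to force per-round loss. In phase $k$ I draw a hidden random direction $\ARvec{w}_k$ on the unit circle, independent across phases and rotated so that it is nearly orthogonal to the previous phases' directions. For $t$ in phase $k$ I use i.i.d.\ signs $\sigma_t$ and set
\[
f_t(\ARvec{x})=\tfrac12+\tfrac{1}{4}\bigl(\epsilon_k\langle \ARvec{w}_k,\ARvec{x}\rangle+\sigma_t\,\phi(\ARvec{x})\bigr),\qquad \epsilon_k\asymp n_k^{-1/2}.
\]
Here $\phi$ is a fixed bounded convex function, for instance $\phi(\ARvec{x})=\|\ARvec{x}\|_2^2-\tfrac12$, which makes the noise genuinely nonlinear. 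Because the losses are non-linear, the learner cannot simply average out the noise using curvature. The point of $\epsilon_k\asymp n_k^{-1/2}$ is that the drift over the phase is of order $\sqrt{n_k}$, matching the noise level. As a result the learner cannot reliably identify $\ARvec{w}_k$ until the end of the phase. The comparator, however, can move to the boundary in direction $-\ARvec{w}_k$ and gain order $n_k\epsilon_k\cdot\Theta(1)=\Theta(\sqrt{n_k})$ relative to the centre. Meanwhile any learner whose iterates are not near the boundary in direction $-\ARvec{w}_k$ loses a constant fraction of this gain.

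\textbf{Key steps.} The proof would proceed in four steps.
\begin{enumerate}
\item Condition on the past, and use that the $\sigma_t$ are independent of $f_{t-1}$ and of $\ARvec{x}_t$. This shows that the revealed half $f_{t-1}$ gives no information about $\sigma_t$. So in expectation the alternating regret of phase $k$ equals the standard regret against the drift term, up to a telescoping boundary term of size $O(1)$ that I must bound by a small constant.
\item Use a two-point or Fano/Pinsker argument on $\ARvec{w}_k$. I discretize $\ARvec{w}_k$ among $O(1)$ directions, and bound the KL divergence of the observed losses over the phase by $O(n_k\epsilon_k^2)=O(1)$, with a small constant obtained by tuning $\epsilon_k$. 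This shows the learner's average iterate has $\E\langle \ARvec{w}_k,\ARvec{x}_t\rangle$ bounded away from $-1$ for a constant fraction of the phase. Hence the learner's expected phase loss exceeds that of the point $-\ARvec{w}_k$ by $\Omega(n_k\epsilon_k)$.
\item Normalize by rescaling the losses by $1/\sqrt{n_k}$. More precisely, I choose the magnitudes so that each phase contributes an additive constant. The comparator is then a single point $\ARvec{u}$, built using the 2D geometry, that is good for all phases simultaneously: for example $\ARvec{u}$ is the normalized sum of $-\ARvec{w}_k$ weighted by the phase scales. The geometric growth $16^k$ makes the later phases dominate their predecessors' contributions.
\item Sum over $k=1,\dots,K$ to get $\frac1{125}K$.
\end{enumerate}

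\textbf{Main obstacle.} The hardest step is step 3: reconciling a \emph{single} comparator across all phases while each phase still gains a constant. The losses of phase $k$ act on the same ball as all the other phases. A naive sum either lets the comparator win only the last phase, or lets the learner's knowledge of $f_{t-1}$ and its accumulated information cancel the gains. I would first try to make the phases \emph{nested} in scale rather than orthogonal. In that version the phase-$k$ losses are supported on a shrinking cap, of radius about $4^{-k}$, around the phase-$(k-1)$ comparator, using hinge functions $\max\{0,\langle\ARvec{a},\ARvec{x}\rangle-b\}$ that vanish off the cap. Each phase then refines the comparator's location inside the previous cap, and the curvature of the disk gives the $\sqrt{n_k}$ versus $n_k\epsilon_k$ balance at every scale. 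The constant $\frac1{125}$ would come from the product of the Pinsker constant, the cap geometry, and the $1/16$ scale ratio.
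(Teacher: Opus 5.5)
\paragraph{Step 1 is false.} The randomize-then-fix-a-realization framing is fine; the paper does the same. But the drift-plus-noise strategy collapses because Step~1 fails. The learner's alternating cost is $\sum_{t}f_t(\ARvec{x}_t)+\sum_{t<T}f_t(\ARvec{x}_{t+1})$. Here $\ARvec{x}_{t+1}$ is chosen \emph{after} $f_t$ is revealed, so the second sum is not a telescoping copy of the first up to $O(1)$.

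With your losses, the learner can take $\ARvec{x}_{t+1}=\ARvec{0}$ when $\sigma_t=+1$ and any boundary point when $\sigma_t=-1$. This makes $\sigma_t\phi(\ARvec{x}_{t+1})=-\tfrac12$ in every round, a gain of $\Theta(1)$ per round over any fixed comparator. The comparator's noise term is only $O(\sqrt{n_k})$ and its drift advantage is $O(n_k\epsilon_k)=O(\sqrt{n_k})$. So each phase has alternating regret $-\Theta(n_k)$. Separately, $f_t$ is concave when $\sigma_t=-1$, so the losses are not admissible.

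No convex, bounded repair can deliver what Steps~1--2 aim for either. An alternating-regret gap of $\Omega(n_k\epsilon_k)=\Omega(\sqrt{n_k})$ over a phase would contradict the paper's upper bound (\Cref{cor:pointwise}), which gives $O(\log T)$ alternating regret in dimension $2$. Hiding a drift direction and invoking Fano/Pinsker targets the wrong source of hardness, and the rescaling in Step~3 cannot rescue a per-phase quantity that is negative.

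\paragraph{What is missing: a way to neutralize the revealed half.} The paper uses losses in $[0,1]$, so that $\AltReg^T\ge\sum_t f_t(\ARvec{x}_t)-2\sum_t f_t(\ARvec{u}_\star)$. The learner's term $f_{t-1}(\ARvec{x}_t)$ is simply dropped. The factor $2$ is harmless provided the hindsight comparator's total loss is tiny; with losses near $\tfrac12$, as in your construction, this inequality would be useless. The proof then needs only two things: (i) a current-round loss whose conditional expectation is at least $p$ at \emph{every} point, and (ii) a hindsight point paying much less.

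A one-round gadget provides both. With probability $1-p_\alpha$ the loss is the linear $\ell_{\ARvec{u}}(\ARvec{x})=(1-\langle\ARvec{u},\ARvec{x}\rangle)/2$. With probability $p_\alpha\asymp\alpha^2$ it is the hinge $h_{\ARvec{u},\alpha}$, which equals $1$ at $\ARvec{u}$ and vanishes on $\{\langle\ARvec{u},\ARvec{x}\rangle\le\cos\alpha\}$. The value of $p_\alpha$ is tuned so that $(1-p_\alpha)\ell_{\ARvec{u}}+p_\alpha h_{\ARvec{u},\alpha}\ge p_\alpha$ pointwise.

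Use blocks of $\lceil\nu/p_{\alpha_k}\rceil$ rounds with $\alpha_k=\sqrt\nu/4^{k-1}$. These force expected learner loss at least $\nu$ per block, with no information-theoretic argument at all. For the comparator, take the endpoint obtained by rotating the center by $\alpha_k$ exactly in the blocks that contained a rare loss. It pays zero on every rare loss. By the curvature of the disk, it pays $O(\nu\alpha_k^2)$ per common round in expectation, i.e.\ $O(\nu^2)$ per block.

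Your ``nested caps with hinge functions'' fallback points in this direction. Without the pointwise mixture inequality, the nonnegativity reduction, and the hindsight-rotation comparator, however, it is not yet a proof.
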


\paragraph{Proof overview.} We in fact prove a stronger lower bound:
\begin{align*}
    \-E_{\+A} [\AltReg^T] \ge \-E_{\+A} \left[\sum_{t=1}^T f_t(\ARvec{x}_t) -2 \min_{\ARvec{u} \in B_2(1)} \sum_{t=1}^T f_t(\ARvec{u})\right] \ge \Omega(\log T).
\end{align*}
The first inequality holds since the losses are non-negative. The factor $2$ in the cumulative loss of the best fixed action is thus not essential. Our lower bound can be easily extended to the regret notion where we compete with $c \cdot \min_{\ARvec{u} \in B_2(1)} \sum_{t=1}^T f_t(\ARvec{u})$ for any constant $c > 0$.

The construction of the loss sequence proceeds in $K = O(\log T)$ blocks. Each block $k \in [K]$ has a center $\ARvec{u}_k$ on the boundary of $B_2(1)$ and an angular scale $\alpha_k$. Within block $k$, each round randomizes between a \emph{common loss}, which is minimized at $\ARvec{u}_k$, and a \emph{rare loss}, which is largest at $\ARvec{u}_k$ but vanishes after a rotation by $\alpha_k$. Fixing a small constant $\nu$, we choose the rare-loss probability $O(\alpha_k^2)$ and the block length proportional to $O(\nu/\alpha_k^2)$. This choice ensures that any algorithm incurs expected loss at least $\nu$ in every block (\Cref{lem:oco-lb-gadget}) and thus cumulative loss of at least $\Omega(\nu K)$ (\Cref{lem:oco-lb-algorithm-cost}). 

After a block, we rotate the center by $\alpha_k$ if the block contains a rare loss, and otherwise leave it unchanged. We maintain geometrically decreasing $\alpha_k = \sqrt{\nu}/4^{k-1}$. We use the endpoint $\ARvec{u}_{K+1}$ after all $K$ rotations in these blocks as the hindsight comparator. We prove that $\ARvec{u}_{K+1}$ has $0$ loss on every realized rare, while incurring only $O(\nu^2)$ expected common loss per block (\Cref{lem:oco-lb-comparator-cost}). Thus the (expected) alternating regret of any algorithm is at least $\Omega(\nu K -  \nu^2 K) = \Omega(K) = \Omega(\log T)$.


\paragraph{One-round loss pair.}
Fix a unit vector $\ARvec{u}\in\partial B_2(1)$ and an angle $0<\alpha\leq1$. Define
\begin{align*}
  \ell_{\ARvec{u}}(\ARvec{x})
  &:=\frac{1-\langle\ARvec{u},\ARvec{x}\rangle}{2},\\
  h_{\ARvec{u},\alpha}(\ARvec{x})
  &:=\max\left\{0,\frac{\langle\ARvec{u},\ARvec{x}\rangle-\cos\alpha}{1-\cos\alpha}\right\}, \\
  p_\alpha&:=\frac{1-\cos\alpha}{3-\cos\alpha}.
\end{align*}
We refer to $\ell_{\ARvec{u}}$ as the common loss and $h_{\ARvec{u},\alpha}$ as the rare loss. We will randomly choose the common loss $\ell_{\ARvec{u}}$ with probability $1 - p_\alpha$ and $h_{\ARvec{u},\alpha}$ with probability $p_\alpha$. We first present some useful properties of these loss functions.

\begin{lemma}[Properties of one-round loss pair]\label{lem:oco-lb-gadget}
The common loss $\ell_{\ARvec{u}}$ and the rare loss $ h_{\ARvec{u}, \alpha}$ are continuous and convex and take values in $[0,1]$. Moreover, $h_{\ARvec{u},\alpha}(\ARvec{u})=1$, and $h_{\ARvec{u},\alpha}(\ARvec{v})=0$ whenever $\langle\ARvec{u},\ARvec{v}\rangle\leq\cos\alpha$. For every $\ARvec{x}\in B_2(1)$, we have
\[
  (1-p_\alpha)\ell_{\ARvec{u}}(\ARvec{x})
  +p_\alpha h_{\ARvec{u},\alpha}(\ARvec{x})
  \geq p_\alpha.
\]
Finally,
$
  \alpha^2/9\leq p_\alpha\leq\alpha^2/4
$
\end{lemma}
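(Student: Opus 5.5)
The plan is to reduce everything to the single scalar $s:=\langle\ARvec{u},\ARvec{x}\rangle\in[-1,1]$ and then check each claim directly. Continuity and convexity come first. $\ell_{\ARvec{u}}$ is affine in $\ARvec{x}$. $h_{\ARvec{u},\alpha}$ is the maximum of two affine functions; the denominator $1-\cos\alpha$ is positive because $0<\alpha\le 1<\pi$. For the range, Cauchy--Schwarz on $B_2(1)$ gives $s\in[-1,1]$, so $\ell_{\ARvec{u}}=(1-s)/2\in[0,1]$. The rare loss satisfies $h_{\ARvec{u},\alpha}\in[0,(1-\cos\alpha)/(1-\cos\alpha)]=[0,1]$, since the affine part is increasing in $s$ and $s\le 1$. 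The identities $h_{\ARvec{u},\alpha}(\ARvec{u})=1$ (using $\|\ARvec{u}\|=1$) and $h_{\ARvec{u},\alpha}(\ARvec{v})=0$ when $\langle\ARvec{u},\ARvec{v}\rangle\le\cos\alpha$ are immediate from the definition.

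For the key mixture inequality, write $c:=\cos\alpha$ and $G(s):=(1-p_\alpha)\frac{1-s}{2}+p_\alpha\max\{0,\frac{s-c}{1-c}\}$, a convex piecewise-linear function of $s\in[-1,1]$.
\begin{itemize}
\item On $[-1,c]$ it is decreasing, so its minimum there is at $s=c$, where $G(c)=(1-p_\alpha)(1-c)/2$.
\item On $[c,1]$ its slope is $-(1-p_\alpha)/2+p_\alpha/(1-c)$.
\end{itemize}
Substituting $p_\alpha=(1-c)/(3-c)$ gives $1-p_\alpha=2/(3-c)$, so the slope is $-1/(3-c)+1/(3-c)=0$. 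Thus $p_\alpha$ was chosen exactly to make $G$ flat on $[c,1]$, and $\min G=G(c)=\frac{2}{3-c}\cdot\frac{1-c}{2}=p_\alpha$. So the inequality holds, with equality on the whole cap $s\ge c$. This computation is the heart of the lemma, and the only thing to be careful about is getting the constant right.

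Finally, for $\alpha^2/9\le p_\alpha\le\alpha^2/4$, use the standard bounds $\alpha^2/2-\alpha^4/24\le 1-\cos\alpha\le\alpha^2/2$ together with $3-c\in[2,3]$.
\begin{itemize}
\item Upper bound: $p_\alpha\le(\alpha^2/2)/2=\alpha^2/4$.
\item Lower bound: $p_\alpha\ge(1-\cos\alpha)/3\ge(\alpha^2/2)(1-\alpha^2/12)/3\ge\alpha^2/9$. The last step needs $(1-\alpha^2/12)/6\ge1/9$, i.e.\ $\alpha^2\le 4$, which holds since $\alpha\le1$.
\end{itemize}
The only mildly delicate step is this elementary trig estimate; everything else is direct verification.
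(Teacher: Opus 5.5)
Your proposal is correct and matches the paper's proof essentially step for step. Your zero-slope observation on $[\cos\alpha,1]$ is exactly the paper's identity $(1-p_\alpha)/2=p_\alpha/(1-\cos\alpha)=1/(3-\cos\alpha)$, which makes the mixture equal to $p_\alpha$ on the cap and at least $p_\alpha$ off it. Likewise, your Taylor estimate $1-\cos\alpha\ge\alpha^2/2-\alpha^4/24$ simply justifies the bound $1-\cos\alpha\ge\alpha^2/3$ for $0<\alpha\le 1$ that the paper invokes directly, together with $3-\cos\alpha\in[2,3]$.
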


\begin{proof}
The common loss $\ell_{\ARvec{u}}$ is affine, and the rare loss $h_{\ARvec{u}, \alpha}$ is the maximum of two affine functions. Both are therefore continuous and convex. Since $\langle\ARvec{u},\ARvec{x}\rangle\in[-1,1]$, both take values in $[0,1]$. By definition, we can verify $h_{\ARvec{u},\alpha}(\ARvec{u})=1$, and $h_{\ARvec{u},\alpha}(\ARvec{v})=0$ whenever $\langle\ARvec{u},\ARvec{v}\rangle\leq\cos\alpha$.

It remains to prove the two inequalities. The definition of $p_\alpha$ gives
\[
  \frac{1-p_\alpha}{2}
  =\frac{p_\alpha}{1-\cos\alpha}
  =\frac{1}{3-\cos\alpha}.
\]
If $\langle\ARvec{u},\ARvec{x}\rangle\leq\cos\alpha$, then the rare loss is zero and the common-loss term is at least $p_\alpha$. Otherwise, the two terms sum to
\[
  \frac{1-\langle\ARvec{u},\ARvec{x}\rangle}{3-\cos\alpha}
  +\frac{\langle\ARvec{u},\ARvec{x}\rangle-\cos\alpha}{3-\cos\alpha}
  =p_\alpha.
\]
Finally, for $0<\alpha\leq1$, we have $
  \alpha^2/3\leq1-\cos\alpha\leq \alpha^2/2$ and $2\leq3-\cos\alpha\leq3$,
which proves the bounds on $p_\alpha$.
\end{proof}

\paragraph{Block construction.}
Fix a positive integer $K$ and set $\nu:=10^{-2}$. Start from $\ARvec{u}_1=(1,0)$ and, for $k=1,\ldots,K$, set
\[
  \alpha_k:=\frac{\sqrt\nu}{4^{k-1}},
  \quad
  p_k:=p_{\alpha_k},
  \quad
  n_k:=\left\lceil\frac{\nu}{p_k}\right\rceil.
\]
We construct the loss sequence as follows: for each block $k$ with $n_k$ rounds
\begin{itemize}
    \item[1.] We independently choose the rare loss $h_{\ARvec{u}_k,\alpha_k}$ with probability $p_k$ on each of the $n_k$ rounds, and choose the common loss $\ell_{\ARvec{u}_k}$ otherwise. 
    \item[2.] If no rare loss was sampled in block $k$, set $\ARvec{u}_{k+1} \leftarrow \ARvec{u}_k$. Otherwise at least one rare loss was sampled in block $k$, we update $\ARvec{u}_{k+1}$ by rotating $\ARvec{u}_k$ counterclockwise through $\alpha_k$. That is, if $\ARvec{u}_k = (\cos(\phi), \sin(\phi))$, then update $\ARvec{u}_{k+1} \leftarrow (\cos(\phi + \alpha_k), \sin(\phi + \alpha_k))$. 
\end{itemize}
Let $N_K = \sum_{k=1}^K n_k$. We note that this is an oblivious randomized adversary.  We use $\E_{\mathrm{loss}}$ for expectation over the construction and $\E_{\mathrm{loss},\mathcal A}$ for expectation over both the construction and the  randomness of $\mathcal A$.

\begin{lemma}[Cost of the algorithm]\label{lem:oco-lb-algorithm-cost}
For every online algorithm $\+A$,
\[
  \E_{\mathrm{loss},\mathcal A}\left[\sum_{t=1}^{N_K} f_t(\ARvec{x}_t)\right]
  \geq K\nu.
\]
\end{lemma}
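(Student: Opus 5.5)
The plan is to bound the algorithm's expected loss block by block, conditioning on the history, and then sum over the $K$ blocks. Fix a block $k$ and a round $t$ inside it. The center $\ARvec{u}_k$ is determined by the outcomes of blocks $1,\ldots,k-1$, so it is a function of the loss history before block $k$. The algorithm's action $\ARvec{x}_t$ depends only on $f_1,\ldots,f_{t-1}$ and on the internal randomness of $\mathcal A$. The choice of $f_t$, rare with probability $p_k$ and common otherwise, is an independent coin flip made after all of this. Let $\mathcal F_{t-1}$ be the sigma-algebra generated by the past losses and the randomness of $\mathcal A$ up to round $t$. Then
\[
\E\left[f_t(\ARvec{x}_t)\mid\mathcal F_{t-1}\right]=(1-p_k)\ell_{\ARvec{u}_k}(\ARvec{x}_t)+p_k h_{\ARvec{u}_k,\alpha_k}(\ARvec{x}_t)\ge p_k,
\]
where the inequality is \Cref{lem:oco-lb-gadget} applied pointwise with $\ARvec{u}=\ARvec{u}_k$ and $\alpha=\alpha_k\le\sqrt\nu\le 1$, which is admissible.

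Summing over the $n_k$ rounds of block $k$ and applying the tower rule gives an expected block loss of at least $n_k p_k=\lceil \nu/p_k\rceil p_k\ge\nu$. Summing over $k=1,\ldots,K$ gives the bound $K\nu$ by linearity of expectation.

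The main point needing care is that the adversary is oblivious yet its center $\ARvec{u}_k$ is random. This is harmless because $\ARvec{u}_k$ depends only on coin flips from earlier blocks, which the algorithm may observe through the past losses. All we need is that the current round's coin is independent of $\ARvec{x}_t$ given $\mathcal F_{t-1}$, which holds by construction. The gadget inequality holds for every $\ARvec{x}\in B_2(1)$ and every center, so it also holds after conditioning. No further obstacle is expected.
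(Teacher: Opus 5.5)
Your proposal is correct and follows essentially the same argument as the paper: condition on the history and the algorithm's private randomness, so that $\ARvec{x}_t$ and $\ARvec{u}_k$ are fixed while the current common-or-rare coin stays independent. Then apply \Cref{lem:oco-lb-gadget} pointwise to get conditional expected loss at least $p_k$, and sum to obtain $\sum_{k=1}^K n_k p_k \ge K\nu$.
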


\begin{proof}
Consider a round in block $k$ and condition on the past and on the private randomness used to choose $\ARvec{x}_t$. This fixes $\ARvec{x}_t$ and $\ARvec{u}_k$, while the current common-or-rare choice remains independent. By \cref{lem:oco-lb-gadget}, the conditional expected loss is at least $p_k$. Summing over the blocks gives
\[
  \E_{\mathrm{loss},\mathcal A}\left[\sum_{t=1}^{N_K} f_t(\ARvec{x}_t)\right]
  \geq\sum_{k=1}^K n_kp_k
  \geq K\nu. \qedhere
\]
\end{proof}

\begin{lemma}[Cost of the endpoint]\label{lem:oco-lb-comparator-cost}
The endpoint of the block construction $\ARvec{u}_\star:= \ARvec{u}_{K+1}$ satisfies
\[
  \E_{\mathrm{loss}}\left[\sum_{t=1}^{N_K} f_t(\ARvec{u}_\star)\right]
  \leq10K\nu^2.
\]
\end{lemma}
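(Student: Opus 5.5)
We bound the endpoint's loss block by block, splitting each block's rounds into rare and common losses.

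\emph{Position of the endpoint relative to each center.} Let $\ARvec{u}_k=(\cos\phi_k,\sin\phi_k)$. Since all rotations are counterclockwise, $\phi_{K+1}-\phi_k=\sum_{j\ge k}\alpha_j\mathbb 1[\text{block }j\text{ rotates}]$.

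\emph{Rare losses.} Fix a block $k$ that contains at least one rare loss. Then the rotation in block $k$ occurs, so
\[
\alpha_k\le\phi_{K+1}-\phi_k\le\sum_{j\ge k}\alpha_j\le\tfrac43\alpha_k\le 1 .
\]
Since the angle is at least $\alpha_k$ and at most $1<\pi$, we get $\langle\ARvec{u}_k,\ARvec{u}_\star\rangle\le\cos\alpha_k$. By \Cref{lem:oco-lb-gadget}, this gives $h_{\ARvec{u}_k,\alpha_k}(\ARvec{u}_\star)=0$. Hence every realized rare loss contributes zero, deterministically.

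\emph{Common losses.} The angle $\psi_k$ between $\ARvec{u}_k$ and $\ARvec{u}_\star$ satisfies $\ell_{\ARvec{u}_k}(\ARvec{u}_\star)=(1-\cos\psi_k)/2\le\psi_k^2/4$, and block $k$ contains at most $n_k$ common rounds. We bound $\E[\psi_k^2]$ using $\psi_k\le\sum_{j\ge k}\alpha_j R_j$, where $R_j$ is the indicator that block $j$ rotates. Cauchy--Schwarz with weights $\alpha_j$ gives
\[
\psi_k^2\le\Bigl(\sum_{j\ge k}\alpha_j\Bigr)\sum_{j\ge k}\alpha_jR_j\le\tfrac43\alpha_k\sum_{j\ge k}\alpha_jR_j .
\]
The event $R_j$ depends only on block $j$'s coin flips, and $\Pr[R_j]\le n_jp_j\le\nu+p_j\le 2\nu$, since $p_j\le\alpha_j^2/4\le\nu/4$. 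Therefore
\[
\E[\psi_k^2]\le\tfrac43\alpha_k\cdot 2\nu\cdot\tfrac43\alpha_k=\tfrac{32}{9}\nu\alpha_k^2 .
\]
The expected common cost of block $k$ is then at most $n_k\cdot\frac{8}{9}\nu\alpha_k^2$. Using $n_k\le \nu/p_k+1\le 9\nu/\alpha_k^2+1$, this is at most $8\nu^2+\frac89\nu\alpha_k^2\le 8\nu^2+\frac89\nu^2\le 10\nu^2$. Summing over the $K$ blocks gives the claimed $10K\nu^2$.

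The main obstacle is the cross-block dependence. The endpoint's location depends on every later block's random rotation, and a naive bound would lose either the $\nu$ factor or the geometric decay. The fix is to avoid conditioning on anything and to bound $\psi_k^2$ deterministically by the weighted sum of rotation indicators, as above; linearity of expectation plus the per-block bound $\Pr[R_j]\le2\nu$ then suffices, with no independence needed beyond that. A second care point is checking that the total angle $\frac43\alpha_1<1<\pi$, so that angles do not wrap around and the cosine comparison in the rare-loss step is valid.
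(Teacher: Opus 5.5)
Your proposal is correct and follows the paper's proof essentially step for step. Rare losses vanish deterministically because the endpoint's counterclockwise offset lies in $[\alpha_k, \tfrac43\alpha_k]$ whenever block $k$ rotates. The common-loss bound rests on $\psi_k^2 \le \tfrac43\alpha_k \sum_{j\ge k}\alpha_j R_j$, together with $\Pr(R_j=1)\le 2\nu$ and linearity of expectation; your Cauchy--Schwarz step is exactly the paper's inequality $\Delta_k^2\le\tfrac43\alpha_k\Delta_k$. Your intermediate constant ($\tfrac{32}{9}$ instead of $4$) is slightly tighter, but the per-block $10\nu^2$ bound is reached the same way.
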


\begin{proof}
Let $I_j$ indicate whether block $j$ contains a rare loss. By the union bound and the definition of $n_j$,
\[
  \Pr(I_j=1)
  =1-(1-p_j)^{n_j}
  \leq n_jp_j
  \leq\nu+p_j
  <2\nu,
\]
where the last inequality uses $p_j\leq\alpha_j^2/4\leq\nu/4$ by \Cref{lem:oco-lb-gadget} and the definition $\alpha_j = \sqrt{\nu}/4^{j-1}$.

Define counterclockwise angle from $\ARvec{u}_k$ to $\ARvec{u}_\star$ as $
  \Delta_k:=\sum_{j=k}^K I_j\alpha_j$. By definition of $\alpha_k$, we have
\[
  0\leq\Delta_k\leq\sum_{j=k}^{\infty}\alpha_j
  =\frac{4\alpha_k}{3}<\pi.
\]
If block $k$ contains a rare loss, then $\Delta_k\geq\alpha_k$. Thus $\langle \ARvec{u}_{k}, \ARvec{u}_{\star}\rangle = \cos\Delta_k \le \cos \alpha_k$ and the rare loss $h_{\ARvec{u}_k, \alpha_k}(\ARvec{u}_\star) = 0$ is zero (\Cref{lem:oco-lb-gadget}). Each common loss in the block is at most $
  \ell_{\ARvec{u}_k}(\ARvec{u}_\star)
  =\frac{1-\cos\Delta_k}{2}
  \leq\frac{\Delta_k^2}{4}$.
Since $\Delta_k\leq4\alpha_k/3$, we have
\begin{align*}
  \E_{\mathrm{loss}}[\Delta_k^2] \leq\frac{4\alpha_k}{3}\E_{\mathrm{loss}}[\Delta_k] =\frac{4\alpha_k}{3}\sum_{j=k}^K\alpha_j\Pr(I_j=1)<\frac{4\alpha_k}{3}\cdot2\nu\cdot\frac{4\alpha_k}{3}
  <4\nu\alpha_k^2.
\end{align*}
Then by $p_k \ge \alpha_k^2/9$ (\cref{lem:oco-lb-gadget}) and $\alpha_k^2\leq\nu$, the expected loss of $\ARvec{u}_\star$ in block $k$ is at most
\[
  n_k \-E_{\mathrm{loss}}[ \ell_{\ARvec{u}_k}(\ARvec{u}_\star)]\le  \frac{n_k}{4} \-E_{\mathrm{loss}}[\Delta_k^2] \le n_k \nu \alpha_k^2 
  \leq\nu \left(\frac{\nu}{p_k}+1\right)\alpha_k^2
  \leq \nu (9\nu+\alpha_k^2)
  \leq10\nu^2.
\]
Summing over the blocks proves the claim.
\end{proof}

\begin{proof}[Proof of \Cref{thm:oco-lower}]
We assume $K:=\lfloor\log_{16}T\rfloor \ge 1$ since otherwise the claim is trivial. We use the loss construction described above. Since $\alpha_k^2=\nu/16^{k-1}$, \cref{lem:oco-lb-gadget} gives
$
  n_k
  \leq \nu/p_k+1 \le 9\nu/\alpha_k^2+1
  \leq9\cdot16^{k-1}+1.
$
Consequently,
\[
  N_K=\sum_{k=1}^K n_k\leq\frac{9(16^K-1)}{15}+K\leq16^K\leq T.
\]
We pad zero losses for rounds $N_K < t \le T$.

For every realization of the loss construction and of the algorithm's randomness, we evaluate alternating regret at $\ARvec{u}_\star$. Since all losses are nonnegative and each comparator loss appears at most twice, we have
\[
  \AltReg^T
  \geq\sum_{t=1}^{N_K} f_t(\ARvec{x}_t)
  -2\sum_{t=1}^{N_K} f_t(\ARvec{u}_\star).
\]
Applying \cref{lem:oco-lb-algorithm-cost,lem:oco-lb-comparator-cost} and using $\nu=10^{-2}$ gives
\[
  \E_{\mathrm{loss}}\left[
    \E_{\mathcal A}\left[\AltReg^T\mid f_1,\ldots,f_T\right]
  \right] =\E_{\mathrm{loss},\mathcal A}[\AltReg^T]
  \geq K\nu-20K\nu^2
  =\frac{K}{125}.
\]
Thus some fixed realization of the loss sequence satisfies the desired bound. 
\end{proof}

\subsection{An $\Omega(d\log (1+T/d))$ lower bound for OCO}\label{sec:oco-lower-dim}

\Cref{thm:oco-lower} shows that the $\log T$ factor in \Cref{cor:pointwise} is
unavoidable even for $2$-dimensional unit ball. We now show that the linear dependence on $d$ is also necessary in
the worst case over a convex set $\+X \subseteq B_d(1)$ and prove the optimal lower bound of $\Omega(d\log (1+T/d))$.

\paragraph{High-level idea:} Roughly speaking, we construct the losses in phases,  where in each phase we simulate the lower bound construction in \Cref{thm:oco-lower} on $2$ coordinates. There are $d/2$ phases and each phase contains $m = \Theta(T/d)$ rounds. By \Cref{thm:oco-lower}, the alternating regret in each phase is at least $\Omega(\log(1 + T/d))$. Thus the total alternating regret is at least $\Omega(d \log (1+ T/d))$. 

Let $r:=1/\sqrt d$. Define
\[
  \+X
  :=
  \underbrace{rB_2(1)\times\cdots\times rB_2(1)}_{\lfloor d/2\rfloor \ \text{times}}
  \times[-r,r]^{\,d-2\lfloor d/2\rfloor}
  \ \subseteq\ \R^{d},
\]
where $d-2\lfloor d/2\rfloor\in\{0,1\}$. The set $\+X$ is compact, convex, and
full-dimensional. Moreover, every $\ARvec{x}\in\+X$ satisfies
\[
  \lVert\ARvec{x}\rVert_2^2
  \le \lfloor d/2\rfloor r^2+(d-2 \lfloor d/2\rfloor)r^2
  =(d-\lfloor d/2\rfloor)r^2
  \le1,
\]
so $\+X\subseteq B_d(1)$. For $j\in[\lfloor d/2\rfloor]$, let $\pi_j:\+X\to B_2(1)$ be the
projection onto the $j$-th two-dimensional block, rescaled by $1/r$. Thus
$\pi_j$ is linear and $\pi_j(\+X)=B_2(1)$. Throughout this subsection, let
$\AltReg^T:=\max_{\ARvec{u}\in\+X}\AltReg^T(\ARvec{u})$.

\begin{theorem}[Linear dimension dependence is necessary]\label{thm:oco-lower-dim}
For every $d\ge2$, every horizon $T\ge16$, and every possibly randomized online
algorithm $\mathcal A$, there is a sequence of continuous convex losses
$f_1,\ldots,f_T:\+X\to[0,1]$ such that
\[
  \E_{\mathcal A}\left[\AltReg^T\right]\ \ge\ \frac{mK}{125}
  \quad\text{where}\quad
  m:=\min\left\{\left\lfloor\frac d2\right\rfloor,\left\lfloor\frac T{16}\right\rfloor\right\},
  \quad
  K:=\left\lfloor\log_{16}\left\lfloor\frac Tm\right\rfloor\right\rfloor .
\]
In particular $\E_{\mathcal A}[\AltReg^T]=\Omega\!\left(d\log(1+T/d)\right)$.
\end{theorem}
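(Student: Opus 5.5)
We will run $m$ consecutive phases, one per two-dimensional block $j\in[m]$, and in each phase replay the randomized block construction behind \Cref{thm:oco-lower} through the projection $\pi_j$. Split the first $m\lfloor T/m\rfloor$ rounds into $m$ phases of length $T':=\lfloor T/m\rfloor\ge 16$, and pad any leftover rounds with the zero loss. In phase $j$, we use losses of the form $f_t=\tilde f_t\circ\pi_j$, where $\tilde f_t:B_2(1)\to[0,1]$ are the common/rare losses from the two-dimensional construction with $K=\lfloor\log_{16}T'\rfloor\ge1$ blocks; this uses at most $16^K\le T'$ rounds, and the rest of the phase is padded with zeros. Since $\pi_j$ is linear, each $f_t$ is continuous, convex, and $[0,1]$-valued on $\+X$. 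The phases use fresh, independent randomness.

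\textbf{Algorithm side.} Consider a round in phase $j$ and condition on the whole past, including the algorithm's private randomness. Then $\ARvec{x}_t$ and $\pi_j(\ARvec{x}_t)\in B_2(1)$ are fixed, while the common/rare coin for this round is still independent. \Cref{lem:oco-lb-gadget} applied at $\pi_j(\ARvec{x}_t)$ shows that the conditional expected loss is at least $p_k$. Summing as in \Cref{lem:oco-lb-algorithm-cost}, the expected algorithm loss in each phase is at least $K\nu$, hence at least $mK\nu$ in total. Nothing depends on how the algorithm uses the other coordinates, so an adaptive algorithm that exploits information across phases gains nothing: the per-round bound holds conditionally on everything.

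\textbf{Comparator side.} For phase $j$, let $\ARvec{u}^{(j)}_\star\in B_2(1)$ be the endpoint of that phase's construction. Define $\ARvec{u}_\star\in\+X$ by placing $r\ARvec{u}^{(j)}_\star$ in the $j$-th two-dimensional block for $j\le m$, and zero in all other coordinates. Then $\pi_j(\ARvec{u}_\star)=\ARvec{u}^{(j)}_\star$. Every loss in phase $j$ depends only on block $j$, so the comparator's loss in phase $j$ equals the endpoint loss of the two-dimensional construction. By \Cref{lem:oco-lb-comparator-cost} its expectation is at most $10K\nu^2$.

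\textbf{Combining.} All losses are nonnegative, and each comparator loss appears at most twice in the alternating sum. Hence
\[
\AltReg^T\ge\sum_t f_t(\ARvec{x}_t)-2\sum_t f_t(\ARvec{u}_\star),
\]
and taking expectations gives $\E[\AltReg^T]\ge mK(\nu-20\nu^2)=mK/125$. Averaging over the construction, some fixed loss realization achieves this against $\E_{\mathcal A}$.

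\textbf{Asymptotics.} If $d\le T/8$, then $m=\lfloor d/2\rfloor=\Theta(d)$ and $T/m\ge 2T/d$, so $K=\Omega(\log(1+T/d))$; this needs a small check when $T/d$ is near the threshold $16$. Otherwise $m=\lfloor T/16\rfloor=\Theta(T)$ and $K\ge1$, giving $\Omega(T)=\Omega(d\log(1+T/d))$ because $T/d$ is then bounded. This bookkeeping, including the floors and the case $T/m<16^2$ where $K$ is only $1$, is the step that needs the most care. The probabilistic argument itself transfers directly from \Cref{thm:oco-lower}.
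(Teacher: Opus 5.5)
Your proof is correct. Its skeleton matches the paper's: the product of scaled disks, $m$ phases of length $T'=\lfloor T/m\rfloor$, the composite comparator with blocks $r\ARvec{u}^{(j)}_\star$, dropping the $f_{t-1}(\ARvec{x}_t)$ terms, the count $\nu-20\nu^2=1/125$, and the case split at $T=8d$.

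Where you genuinely differ is in how you handle randomness.

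\begin{itemize}
\item The paper derandomizes phase by phase. Once phases $1,\ldots,j-1$ are fixed, it treats the continuation of $\mathcal A$, projected by $\pi_j$, as a new randomized algorithm on $B_2(1)$. It then invokes \Cref{thm:oco-lower} essentially as a black box, in its strengthened form with the $-2\sum_i\tilde f^{(j)}_i(\ARvec{u}^{(j)}_\star)$ comparator, to obtain a deterministic phase-$j$ loss sequence.
\item You keep one oblivious randomized construction with independent coins across phases. You apply \Cref{lem:oco-lb-gadget,lem:oco-lb-algorithm-cost} conditionally on the entire past, apply \Cref{lem:oco-lb-comparator-cost} phase by phase, and extract a fixed realization only once, at the very end.
\end{itemize}

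Your route avoids the somewhat delicate step of reasoning about ``the distribution over $\mathcal A$'s internal state induced by a fixed prefix.'' It also makes robustness to cross-phase adaptivity transparent, because the per-round bound holds conditionally on everything. The paper's route is more modular: it reuses the two-dimensional theorem per phase rather than reopening its lemmas. Even so, it still has to inspect that proof to get the factor-$2$ form, so the practical difference in effort is small.
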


\begin{proof}
Set $T':=\lfloor T/m\rfloor$. Since $m\le\lfloor T/16\rfloor$, we have
$T'\ge16$ and $K=\lfloor\log_{16}T'\rfloor\ge1$. Since $m\le \lfloor d/2\rfloor$, we use
the first $mT'$ rounds as $m$ consecutive phases of $T'$ rounds, one for each
of the first $m$ two-dimensional blocks.

We construct the losses in these phases successively. For $j\in[m]$, suppose
the losses in the first $j-1$ phases have been fixed. This fixed prefix induces
a distribution over $\mathcal A$'s internal state. Starting from this state,
simulate $\mathcal A$ for $T'$ more rounds: return its decisions projected by
$\pi_j$, and feed each two-dimensional loss back to $\mathcal A$ after lifting
it through $\pi_j$. This defines a possibly randomized online algorithm on
$B_2(1)$. Inspecting the proof of \Cref{thm:oco-lower}, a fixed sequence of
continuous convex losses
$\tilde f^{(j)}_1,\ldots,\tilde f^{(j)}_{T'}:B_2(1)\to[0,1]$ can be chosen for
this projected algorithm. Let
$\ARvec{u}^{(j)}_\star\in B_2(1)$ minimize
$\sum_{i=1}^{T'}\tilde f^{(j)}_i(\ARvec{u})$. Then
\[
  \E_{\mathcal A}\left[
    \sum_{i=1}^{T'}
    \tilde f^{(j)}_i\!\left(
      \pi_j\!\left(\ARvec{x}_{(j-1)T'+i}\right)
    \right)
    -2\sum_{i=1}^{T'}\tilde f^{(j)}_i(\ARvec{u}^{(j)}_\star)
  \right]
  \ge\frac{K}{125}.
\]
For $t=(j-1)T'+i$, set $f_t:=\tilde f^{(j)}_i\circ\pi_j$. After all $m$
phases have been fixed, set $f_t\equiv0$ for $mT'<t\le T$. Each phase is fixed
for the randomized algorithm induced by the preceding phases, not for a
realization of $\mathcal A$'s random choices. Thus the resulting loss sequence
is deterministic and oblivious. Since $\pi_j$ is linear, every $f_t$ is
continuous, convex, and takes values in $[0,1]$.

Let $\ARvec{u}_\star\in\+X$ have $j$-th two-dimensional block
$r\ARvec{u}^{(j)}_\star$ for $j\in[m]$ and all remaining coordinates equal to
zero. Then $\pi_j(\ARvec{u}_\star)=\ARvec{u}^{(j)}_\star$ for every $j\in[m]$.
Since the losses are nonnegative, we may discard the terms
$f_{t-1}(\ARvec{x}_t)$ from the learner's loss, while every comparator loss
appears at most twice. Therefore,
\begin{align*}
  \AltReg^T
  \ge
  \sum_{t=1}^T f_t(\ARvec{x}_t)
  -2\sum_{t=1}^T f_t(\ARvec{u}_\star)=\sum_{j=1}^m\left[
    \sum_{i=1}^{T'}
    \tilde f^{(j)}_i\!\left(
      \pi_j\!\left(\ARvec{x}_{(j-1)T'+i}\right)
    \right)
    -2\sum_{i=1}^{T'}\tilde f^{(j)}_i(\ARvec{u}^{(j)}_\star)
  \right].
\end{align*}
Taking expectation and summing the phase guarantees gives
\[
  \E_{\mathcal A}[\AltReg^T]\ge\frac{mK}{125}.
\]

It remains to verify the asymptotic bound. If $T\ge8d$, then
$m=\lfloor d/2\rfloor\ge d/3$ and
$
  T'=\lfloor T/m \rfloor
  \ge T/(2m)
  \ge T/d.
$
Since $T'\ge16$, we have
$K=\Omega(\log(1+T'))=\Omega(\log(1+T/d))$. Hence
$mK=\Omega(d\log(1+T/d))$. If $T<8d$, then $m=\lfloor T/16\rfloor\ge T/32$ and $K\ge1$. Since $\log(1+x)\le x$, we have $ mK\ge T/32\ge d\log\left(1+ T/d\right)/32$. Thus we conclude $\E_{\mathcal A}[\AltReg^T] = \Omega(d\log(1+T/d))$.
\end{proof}

Combining \Cref{thm:oco-lower-dim} with \Cref{cor:pointwise} pins down the worst-case alternating regret over
$d$-dimensional bodies in every regime of $T$ and $d$.

\begin{corollary}[Minimax-optimal alternating rerget for OCO]\label{cor:oco-tight}
Let $\+R^\star(d,T):=\sup_{\+X}\inf_{\+A}\sup_{f_1,\ldots,f_T}\AltReg^T$, where
the outer supremum is over compact convex $\+X\subseteq\R^d$ and the losses are
convex with values in $[-1,1]$. Then, for all $d\ge2$ and $T\ge16$,
\[
  \+R^\star(d,T)=\Theta\!\left(d\log\left(1+\frac Td\right)\right).
\]
\end{corollary}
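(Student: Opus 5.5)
The plan is to sandwich $\+R^\star(d,T)$ between the two bounds already proved, checking that their hypotheses match the minimax quantity as defined. For the upper bound I would fix an arbitrary compact convex $\+X\subseteq\R^d$ and take $\+A$ to be \Cref{alg:main} run with $\mu$ the uniform probability measure on $\+X$. If $\+X$ is not full-dimensional, I would take $\mu$ uniform on its affine hull, whose dimension $d'\le d$, as in the footnote. \Cref{cor:pointwise} then gives, against any adaptive sequence of convex losses with values in $[-1,1]$,
\[
\max_{\ARvec{u}\in\+X}\AltReg^T(\ARvec{u})\le 8d'\left(1+\logp\tfrac{T}{2d'}\right)+\tfrac14 .
\]
The right-hand side is nondecreasing in $d'$, because $d'\logp(T/(2d'))$ is nondecreasing in $d'$. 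So the bound holds with $d$ in place of $d'$. Since the bound is uniform over $\+X$, we get $\+R^\star(d,T)\le 8d(1+\logp\frac{T}{2d})+\frac14$.

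Next I would show that this is $O(d\log(1+T/d))$ whenever $d\ge2$ and $T\ge16$.
\begin{itemize}
\item If $T\le 2d$, the bound is $8d+\frac14$. Here $T/d\ge 16/d$, and $d\log(1+16/d)$ is increasing in $d$ and bounded below by a constant. Together with $\log(1+T/d)\ge T/(2d)\cdot\log 3/... $ being of order $1$ when $T/d\in[16/d,2]$, this gives $8d+\frac14=O(d\log(1+T/d))$. More cleanly, one can also use the trivial bound $4T$, as \Cref{cor:pointwise} does, which gives $\min\{4T,\,8d+\tfrac14\}=O(d\log(1+T/d))$ using $\log(1+x)\ge x/2$ for $x\le 2$.
\item If $T>2d$, then $\log(T/(2d))\le\log(1+T/d)$ and $1\le\log(1+T/d)/\log 3$, so the bound is again $O(d\log(1+T/d))$.
\end{itemize}

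For the lower bound, I would exhibit the specific set $\+X\subseteq B_d(1)$ from \Cref{thm:oco-lower-dim}. For every possibly randomized algorithm, that theorem supplies a fixed oblivious loss sequence with values in $[0,1]\subseteq[-1,1]$ satisfying $\E_{\+A}[\AltReg^T]=\Omega(d\log(1+T/d))$. The one subtle point is the mismatch between this in-expectation statement and the definition of $\+R^\star$, where $\inf_{\+A}\sup_{f}$ applies to the regret itself. For a deterministic algorithm the expectation is the realized value, so the bound transfers directly. For randomized algorithms I would read $\+R^\star$ as expected regret, or simply note that some realization achieves at least the expectation. Taking the supremum over $\+X$ then gives $\+R^\star(d,T)=\Omega(d\log(1+T/d))$ for $d\ge2$ and $T\ge16$, which are exactly the hypotheses of \Cref{thm:oco-lower-dim}.

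The main obstacle is bookkeeping rather than mathematics: reconciling the precise constants and regimes, namely the $\logp(T/2d)$ form against $\log(1+T/d)$ in the small-$T$ regime, and the randomized versus deterministic interpretation of the $\inf_{\+A}$. I would address both explicitly in the proof.
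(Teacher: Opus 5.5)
Your proposal is correct and follows the paper's route. The paper justifies the corollary only by combining \Cref{cor:pointwise} (including its trivial $4T$ cap) for the upper bound with \Cref{thm:oco-lower-dim} for the lower bound; your extra bookkeeping for lower-dimensional $\+X$ and for randomized versus deterministic algorithms is sound.

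There are two slips to fix. First, monotonicity in $d'$ holds because $x\bigl(1+\logp\frac{T}{2x}\bigr)$ has derivative $\log\frac{T}{2x}\ge0$ for $x<T/2$ and equals $x$ beyond $T/2$. It does not hold because $x\logp\frac{T}{2x}$ is monotone: that function decreases on $(T/(2e),T/2)$. Second, delete your first argument in the case $T\le 2d$. There $8d+\frac14$ is not $O(d\log(1+T/d))$ when $d\gg T$ (take $T=16$, $d\to\infty$), so only the $4T$ cap combined with $\log(1+x)\ge x/2$ on $[0,2]$ handles that regime.
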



\section{Conclusion}
In this paper, we settle the minimax-optimal alternating regret for both OLO and OCO. For OLO over the simplex $\Delta_d$, we give the alternation-aware Hedge (AA-Hedge) algorithm with optimal $O(\log d)$ regret. For OCO, we give the continuous AA-Hedge algorithm with $O(d \log (1+T/d))$ regret and proves a $\Omega(d \log (1+T/d))$ lower bound. Our results imply the first alternating learning dynamics with $O(1/T)$ convergence to CCE in two-player general-sum games. 

\section*{Acknowledgement} The proofs were first obtained with the assistance of ChatGPT 5.6 Sol and Claude Opus 5. The authors subsequently verified and significantly rewrote the proofs for better exposition.

\bibliographystyle{abbrvnat}
\bibliography{refs}

\appendix
\section{Proofs for Online Convex Optimization}\label{sec:oco-proofs}
The proof follows the same strategy as in \Cref{sec:simplex}: we first show that the update is well defined and feasible, and then use a one-step inequality to prove that a potential function is nonincreasing.

\subsection{Existence and feasibility of the update}

\begin{proof}[Proof of \Cref{lem:continuous-center}]
Define
\[
F_t(\theta):=\log\int_{\+X}
\exp\!\left[-\eta L_{t-1}(\ARvec{u})-\beta(f_{t-1}(\ARvec{u})-\theta)^2\right]
\mu(\dd\ARvec{u}),
\]
and let $P_{t,\theta}$ be the probability measure obtained by normalizing the integrand. $F_t$ is differentiable since it is logarithm of a bounded integrand over a compact set. We calculate its first-order and second-order derivative:
\begin{align*}
F_t'(\theta)
&=2\beta\left(\E_{P_{t,\theta}}[f_{t-1}]-\theta\right),\\
F_t''(\theta)
&=4\beta^2\operatorname{Var}_{P_{t,\theta}}(f_{t-1})-2\beta.
\end{align*}
Since $f_{t-1}\in[-1,1]$, the variance is at most $1$. Since $\beta<1/2$, we have $F_t''(\theta)\le4\beta^2-2\beta<0$, so $F_t$ is strictly concave. Moreover, $F_t(\theta)\to-\infty$ as $|\theta|\to\infty$, so the maximizer exists and is unique. The first-order condition gives
\[
\theta_t=\E_{P_t}[f_{t-1}]
\in\left[\min_{\ARvec{u}\in\supp(\mu)}f_{t-1}(\ARvec{u}),
\max_{\ARvec{u}\in\supp(\mu)}f_{t-1}(\ARvec{u})\right].
\]
Thus $\E_{P_t}[\xi_t]=0$ and $\xi_t(\ARvec{u})\in[-2,2]$ on $\supp(P_t)$. It follows that $1-\xi_t(\ARvec{u})/2\ge0$ and
\[
Q_t(\+X)=1-\frac12\E_{P_t}[\xi_t]=1.
\]
Hence $Q_t$ is a probability measure. Since $\+X$ is compact and convex, the barycenter of $Q_t$, $\ARvec{x}_t$, lies in $\+X$.
\end{proof}

\subsection{A potential proof of logarithmic alternating regret}

Define the learner's cumulative effective loss through round $t-1$ by $A_{t-1}:=\sum_{s=1}^{t-1}g_s(\ARvec{x}_s)$, and define its pointwise regret by $R_{t-1}(\ARvec{u}):=A_{t-1}-L_{t-1}(\ARvec{u})$.  Consider the potential
\begin{equation}\label{eq:oco-potential}
  \Phi_t
  :=\max_{\theta\in\R}
  \log
  \int_{\X}
  \exp\!\left(
    \eta R_{t-1}(\ARvec{u})-\beta(f_{t-1}(\ARvec{u})-\theta)^2
  \right)\mu(\dd \ARvec{u}).
\end{equation}
The factor $e^{\eta A_{t-1}}$ is common to all comparator points, so the maximizing centering parameter and normalized distribution in \cref{eq:oco-potential} are exactly $\theta_t$ and $P_t$ in \Cref{alg:main}. Initially, $\Phi_1=0$.

\begin{theorem}[Monotonicity of potential]\label{thm:continuous-potential}
Assume $\eta\le1/8$ and $\beta=\eta/4$. For every round $t\ge1$, we have $\Phi_{t+1}\le\Phi_t\le0$.
\end{theorem}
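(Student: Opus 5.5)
The plan is to mirror the proof of \Cref{thm:potential}, replacing sums over $[d]$ by integrals against $\mu$. First, $\Phi_1=0$: since $R_0\equiv 0$ and $f_0\equiv 0$, the integrand is $e^{-\beta\theta^2}$, which is maximized at $\theta=0$ and gives $\log\mu(\X)=0$. It then suffices to show $\Phi_{t+1}\le\Phi_t$. For $\Phi_{t+1}$ I will not use the maximizer; I will evaluate it at the actual maximizer $\theta_{t+1}$. I will also record $\xi_t=f_{t-1}-\theta_t$ and $\xi_{t+1}=f_t-\theta_{t+1}$. By \Cref{lem:continuous-center}, both take values in $[-2,2]$ on $\supp(\mu)$, and $\E_{P_t}[\xi_t]=0$.

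Next I will compute the increment. We have $R_t(\ARvec{u})-R_{t-1}(\ARvec{u})=g_t(\ARvec{x}_t)-f_{t-1}(\ARvec{u})-f_t(\ARvec{u})$. Dividing by the normalizer of $P_t$ gives
\[
\Phi_{t+1}-\Phi_t=\log\int \exp\bigl(\eta(g_t(\ARvec{x}_t)-f_{t-1}(\ARvec{u})-f_t(\ARvec{u}))+\beta(\xi_t(\ARvec{u})^2-\xi_{t+1}(\ARvec{u})^2)\bigr)\,P_t(\dd\ARvec{u}).
\]
The one genuinely new ingredient is convexity. Since $\ARvec{x}_t$ is the barycenter of $Q_t$, Jensen's inequality gives $g_t(\ARvec{x}_t)\le \E_{Q_t}[g_t]=\E_{Q_t}[\xi_t+\xi_{t+1}]$. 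The $\theta$ shifts cancel because $Q_t$ is a probability measure, and the same cancellation turns $f_{t-1}(\ARvec{u})+f_t(\ARvec{u})$ into $\xi_t(\ARvec{u})+\xi_{t+1}(\ARvec{u})$. Since $\eta>0$, this bounds the increment by a functional of $z=\xi_{t+1}$ alone:
\[
H_t(z)=\log\int \exp\bigl(\eta(\E_{Q_t}[\xi_t+z]-\xi_t-z)+\beta(\xi_t^2-z^2)\bigr)\,dP_t .
\]
Here $z$ ranges over measurable functions with values in $[-2,2]$. This is exactly the continuous analogue of $H_t$ in \Cref{lem:one-step}, with $\ARvec{x}_t$ replaced by the density $(1-\xi_t/2)$ of $Q_t$ relative to $P_t$.

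The main step is to show $H_t(z)\le 0$ for all such $z$. I will argue by concavity along segments instead of with a Hessian matrix. Fix $z$ and set $w=-\xi_t$, which has $H_t(w)=0$. Consider $\phi(s)=H_t(w+s(z-w))$ on $[0,1]$. Differentiating under the integral (everything is bounded) gives
\[
\phi''(s)=\E_q\bigl[h^2\bigl((-\eta-2\beta z_s)^2-2\beta\bigr)\bigr]-\bigl(\E_q[h(-\eta-2\beta z_s)]\bigr)^2,
\]
where $h=z-w$, $z_s=w+sh$, and $q$ is the tilted probability measure. The linear term $\eta\,\E_{Q_t}[\cdot]$ contributes nothing to $\phi''$. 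Because $|z_s|\le 2$ and $\beta=\eta/4$ with $\eta\le 1/8$, we have $(\eta+4\beta)^2=4\eta^2\le\eta/2=2\beta$, so $\phi''\le 0$.

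For the first derivative, at $s=0$ the tilted measure is $P_t$. The derivative of the linear term is $\eta\,\E_{P_t}[(1-\xi_t/2)h]$ and the remaining term is $\E_{P_t}[h(-\eta+2\beta\xi_t)]$. These sum to $(2\beta-\eta/2)\E_{P_t}[\xi_t h]=0$. So $\phi'(0)=0$, and with $\phi$ concave this gives $\phi(1)\le\phi(0)=0$. Hence $\Phi_{t+1}\le\Phi_t$.

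The obstacle I expect is mostly technical: justifying differentiation under the integral, and handling points outside $\supp(\mu)$ where the bounds on $\xi$ might fail. Both are harmless since all integrals are against $P_t\ll\mu$ and the integrands are bounded. The conceptual point to get right is using Jensen at the barycenter; this is exactly where convexity of the $f_t$ replaces linearity.
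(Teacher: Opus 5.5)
Your proposal is correct and follows the paper's proof essentially step for step. You use Jensen at the barycenter of $Q_t$ to reduce to the functional $H_t$, show concavity along the segment from $-\xi_t$ via the same bound $(\eta+4\beta)^2=4\eta^2\le 2\beta$, and use the vanishing first derivative $(2\beta-\eta/2)\E_{P_t}[\xi_t h]=0$. One small slip: $\E_{Q_t}[g_t]$ equals $\E_{Q_t}[\xi_t+\xi_{t+1}]+\theta_t+\theta_{t+1}$, not $\E_{Q_t}[\xi_t+\xi_{t+1}]$; the constants only cancel in the difference with $f_{t-1}(\ARvec{u})+f_t(\ARvec{u})$, which is what your next sentence actually uses.
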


\begin{proof}
Define $\xi_t(\ARvec{u}):=f_{t-1}(\ARvec{u})-\theta_t$ and $\xi_{t+1}(\ARvec{u}):=f_t(\ARvec{u})-\theta_{t+1}$. By \Cref{lem:continuous-center}, both functions take values in $[-2,2]$, $\E_{P_t}[\xi_t]=0$, $Q_t=(1-\xi_t/2)P_t$, and $\ARvec{x}_t=\E_{Q_t}[\ARvec{u}]$.

Since $g_t=f_{t-1}+f_t$ is convex, Jensen's inequality gives $g_t(\ARvec{x}_t)\le\E_{Q_t}[g_t]$. The two scalar centers cancel, so for every $\ARvec{u}\in\+X$,
\[
R_t(\ARvec{u})-R_{t-1}(\ARvec{u})
\le \E_{Q_t}[\xi_t+\xi_{t+1}]-\xi_t(\ARvec{u})-\xi_{t+1}(\ARvec{u}).
\]
By the definition of the potential and the fact that $P_t$ is its normalized exponential-weight distribution, we have
\begin{align*}
\Phi_{t+1}-\Phi_t
&=\log\E_{P_t}\exp\!\left\{
\eta(R_t-R_{t-1})+\beta(\xi_t^2-\xi_{t+1}^2)
\right\}\\
&\le\log\E_{P_t}\exp\!\left\{
\eta\bigl(\E_{Q_t}[\xi_t+\xi_{t+1}]-\xi_t-\xi_{t+1}\bigr)
+\beta(\xi_t^2-\xi_{t+1}^2)
\right\}.
\end{align*}
We use the following continuous counterpart of \Cref{lem:one-step}.

\begin{lemma}[Functional one-step inequality]\label[lemma]{lem:functional}
Let $P$ be a probability measure on an arbitrary measurable space. Let $\xi,\zeta$ be measurable functions with values in $[-2,2]$ such that $\E_P[\xi]=0$, and define $Q(\dd u):=(1-\xi(u)/2)P(\dd u)$. If $\eta\le1/8$ and $\beta=\eta/4$, then
\[
\log\E_P\exp\!\left\{
\eta\bigl(\E_Q[\xi+\zeta]-\xi-\zeta\bigr)
+\beta(\xi^2-\zeta^2)
\right\}\le0.
\]
\end{lemma}

\begin{proof}
For any measurable function $z$ with values in $[-2,2]$, define
\[
H(z):=\log\E_P\exp\!\left\{
\eta\bigl(\E_Q[\xi+z]-\xi-z\bigr)+\beta(\xi^2-z^2)
\right\}.
\]
It suffices to show that $H(\zeta)\le0$. Let $\Pi_z$ be the probability measure obtained by normalizing the exponential in the definition of $H(z)$. To show that $H$ is concave, fix functions $z,w$ such that $z_s:=z+sw$ remains in $[-2,2]$, and write $v_s:=-\eta-2\beta z_s$. $H(z_s)$ is differentiable since it is logarithm of a bounded integrand over a compact set. We note that $\-E_{Q}[\xi + z_s]$ is affine and contributes noting to the second-order derivative.
\[
\frac{\dd^2}{\dd s^2}H(z_s)
=\E_{\Pi_{z_s}}\!\left[(v_s^2-2\beta)w^2\right]
-\left(\E_{\Pi_{z_s}}[v_sw]\right)^2.
\]
Since $z_s\in[-2,2]$, $\beta=\eta/4$, and $\eta\le1/8$, we have $v_s\in[-2\eta,0]$ and $v_s^2\le4\eta^2\le2\beta$. Thus $H$ is concave.

At $z=-\xi$, every exponent vanishes, so $H(-\xi)=0$ and $\Pi_{-\xi}=P$. For every bounded direction $w$, the directional derivative is
\begin{align*}
D H(-\xi)[w]
&=\eta\E_Q[w]+\E_P[(-\eta+2\beta\xi)w]\\
&=(2\beta-\eta/2)\E_P[\xi w]=0,
\end{align*}
where the second equality uses $Q=(1-\xi/2)P$ and the last uses $\beta=\eta/4$. Taking $w=\zeta+\xi$, the path $-\xi+sw$ remains in $[-2,2]$ for $s\in[0,1]$. Concavity therefore gives $H(\zeta)\le H(-\xi)=0$.
\end{proof}

Applying \Cref{lem:functional} with $P=P_t$, $\xi=\xi_t$, and $\zeta=\xi_{t+1}$ gives $\Phi_{t+1}\le\Phi_t$. Since $\Phi_1=0$, the claim follows.
\end{proof}

\subsection{\texorpdfstring{Proof of \Cref{thm:distributional}}{Proof of the distributional alternating-regret bound}}\label{sec:proof-distributional}

\begin{proof}[Proof of \Cref{thm:distributional}]
The choices $\eta=1/8$ and $\beta=1/32$ in \Cref{alg:main} satisfy the conditions in \Cref{thm:continuous-potential}. Thus $\Phi_{T+1}\le\Phi_1=0$. By the definition of the potential, for every $\theta\in\R$,
\[
  \log\int_{\X}
  \exp\!\left(
    \eta R_T(\ARvec{u})-\beta(f_T(\ARvec{u})-\theta)^2
  \right)\mu(\dd \ARvec{u})
  \le0.
\]
By \cref{eq:alt-regret}, $R_T(\ARvec{u})=\AltReg^T(\ARvec{u})$.

If $\KL(\rho\|\mu)=\infty$, the claim is immediate. Otherwise, write
$r:=\dd\rho/\dd\mu$. Since $r>0$ $\rho$-almost surely, for every $\theta\in\R$,
\begin{align*}
&\int_{\X}
\exp\!\left(\eta R_T-\beta(f_T-\theta)^2\right)\dd\mu \\
&\qquad\ge
\int_{\{x \in \+X:r>0\}}
\exp\!\left(\eta R_T-\beta(f_T-\theta)^2\right)\dd\mu \\
&\qquad=
\int_{\X}
\exp\!\left(\eta R_T-\beta(f_T-\theta)^2-\log r\right)\dd\rho.
\end{align*}
Here the inequality discards the possible $\mu$-mass on $\{x \in \+X: r=0\}$, and the equality uses $\dd\rho=r\,\dd\mu$.
Taking logarithms and applying Jensen's inequality to the probability measure $\rho$ gives
\begin{align*}
0
&\ge\log\int_{\X}
\exp\!\left(\eta R_T-\beta(f_T-\theta)^2\right)\dd\mu \\
&\ge\log\int_{\X}
\exp\!\left(\eta R_T-\beta(f_T-\theta)^2-\log r\right)\dd\rho \\
&\ge\E_\rho\!\left[\eta R_T-\beta(f_T-\theta)^2-\log r\right] \\
&=\eta\AltReg^T(\rho)-\beta\E_\rho[(f_T-\theta)^2]-\KL(\rho\|\mu).
\end{align*}
Choose $\theta=\E_\rho[f_T]$. Since $1/\eta=8$ and $\beta/\eta=1/4$, rearranging proves the first inequality in \cref{eq:distributional-bound}. The second follows from $\operatorname{Var}_\rho(f_T)\le1$. 
\end{proof}

\end{document}